\documentclass{article} 
\usepackage{iclr2027_conference,times}

\usepackage{amsmath,amsfonts,bm}

\def\eqref#1{equation~\ref{#1}}

\def\1{\bm{1}}

\DeclareMathAlphabet{\mathsfit}{\encodingdefault}{\sfdefault}{m}{sl}
\SetMathAlphabet{\mathsfit}{bold}{\encodingdefault}{\sfdefault}{bx}{n}

\usepackage{hyperref}
\usepackage{url}

\usepackage[table]{xcolor}
\usepackage{booktabs} 
\usepackage{multirow} 
\usepackage[table]{xcolor} 
\usepackage{graphicx} 
\usepackage{subcaption}
\usepackage{booktabs}
\usepackage[table]{xcolor}
\usepackage[table,dvipsnames]{xcolor}
\usepackage{booktabs}
\usepackage{multirow}

\usepackage{amsthm}
\usepackage{enumitem}
\usepackage{comment}

\iclrfinalcopy

\newtheoremstyle{nospace}
  {3pt}          
  {-2pt}          
  {\itshape}     
  {}             
  {\bfseries}    
  {.}            
  { }            
  {}             
\theoremstyle{nospace}
\newtheorem{lemma}{Lemma}
\newtheorem{proposition}{Proposition}
\newtheorem{definition}{Definition}
\newtheorem{theorem}{Theorem}
\newtheorem{corollary}{Corollary}

\usepackage{titlesec}
\titlespacing*{\section}{0pt}{0.6ex}{0.4ex}
\titlespacing*{\subsection}{0pt}{0.3ex}{0.2ex}

\AtBeginDocument{
\setlength\abovedisplayskip{0.1ex}
\setlength\belowdisplayskip{0.1ex}}
\allowdisplaybreaks[4]

\title{GIFT: Gradient Informed Filtering for Distillation Augmented Reasoning RL}

\author{%
\parbox[t]{\dimexpr\textwidth-2\tabcolsep\relax}{%
\centering
\normalfont
\makebox[\linewidth][c]{%
Shijie Li\textsuperscript{1, 2},
Yilin Gao\textsuperscript{2},
Tieyuan Chen\textsuperscript{1},
Yun Luo\textsuperscript{2},
Weiyao Lin\textsuperscript{1,*},
Hang Yu\textsuperscript{2,*}%
}\\[6pt]
\textsuperscript{1}\,Shanghai Jiao Tong University
\quad
\textsuperscript{2}\,Tencent\\[3pt]
{\small shijieli@sjtu.edu.cn}\\[2pt]
{\small \textsuperscript{*}\,Corresponding authors}%
}%
}

\begin{document}

\maketitle

\fancyhead{}

\begin{abstract}
Training large language models for complex reasoning tasks relies on two complementary signals: sparse, outcome-grounded reinforcement learning (RL) for correctness, and dense, token-level distillation for guidance. However, complementarity in supervision does not guarantee compatibility in optimization; imitation-focused distillation can conflict with the outcome-driven RL objective, and existing hybrids lack a principled resolution. We propose \textbf{GIFT} (\textbf{G}radient-\textbf{I}nformed \textbf{F}il\textbf{T}ering for Distillation-Augmented Reasoning RL), a novel method that directly addresses the allocation problem of which teacher updates to apply. GIFT performs a token-level geometric analysis, decomposing the distillation gradient into a component \textbf{parallel} to the RL gradient, which measures first-order conflict and provides a \textbf{safety} filter, and a component \textbf{orthogonal} to it, which quantifies novel directional guidance and provides a measure of \textbf{value}. This decomposition yields a principled scoring rule to rank token-level guidance. We combine this with a progressive \textbf{coverage annealing} schedule that shrinks the budget of tokens receiving distillation, ensuring that the most conflicting updates are discarded first while the most complementary ones are retained at full strength. This facilitates a smooth transition from dense, geometrically-filtered guidance to a pure, outcome-grounded objective at convergence. Extensive experiments on mathematical and logical reasoning benchmarks show that GIFT consistently outperforms strong baselines across multiple model scales, achieving relative gains of up to 9.5\% on difficult mathematical reasoning tasks. Code: \url{https://github.com/lilifree/GIFT}
\end{abstract}



\section{Introduction}
\vspace{-1ex}
\begin{quote}
\itshape
``Thoughts without content are empty, intuitions without concepts are blind. Only through their union can knowledge arise.''
\par
\noindent\hfill\normalfont---\ Immanuel Kant,
\textit{Critique of Pure Reason}
\end{quote}
\vspace{-1.5ex}

Training large language models for complex reasoning relies on two complementary forms of supervision: \textbf{outcome-grounded reinforcement learning and dense, token-level distillation}. Reinforcement learning with verifiable rewards (RLVR), such as GRPO (\cite{grpo}), anchors learning to task success, but offers little guidance about the intermediate decisions that produce an outcome. A single reward supervises an entire response, and GRPO's group-relative signal vanishes when all responses receive the same reward. On-policy self-distillation (OPSD; \cite{opsd}) provides more direct guidance: the same model, conditioned on privileged information such as a reference solution, supplies teacher distributions along the student's own rollout. Yet \textbf{this dense supervision targets imitation rather than correctness}; following it can introduce information leakage and late-stage instability~\citep{rlsd,wei2026amrsdasymmetricmetareflectiveselfdistillation}. We seek \textbf{the best of both worlds: dense teacher guidance to accelerate learning, with verifiable outcomes determining what the model ultimately optimizes}.

Realizing this promise requires more than combining the two signals: \textbf{complementarity in supervision does not guarantee compatibility in optimization}. RLSD and RLCSD use teacher-derived signals to modulate verifier-based advantages (\cite{rlsd}; \cite{pan2026rlcsd}), while OPDVR gates sampled-token distillation rewards by trajectory correctness (\cite{opdvr}). For a nonzero RL gradient at a fixed token, such scalar changes cannot introduce a direction outside its logit-space span. An auxiliary distributional loss, as in SDPG (\cite{sdpg}), can supply such directions, yet admitting a response or token does not establish how its teacher gradient interacts with the current RL update. Teacher withdrawal addresses persistence: RLSD and SDPG anneal teacher influence, and \emph{Sequential Beats Joint} separates distillation and RLVR into successive stages (\cite{li2026sequentialbeatsjointinterplay}). But withdrawal alone leaves an allocation problem. Uniformly weakening or abruptly removing guidance can discard complementary updates together with redundant or conflicting ones. Selecting surviving tokens by disagreement alone can retain a strong teacher signal precisely where it obstructs the outcome update. \textbf{Which token-level teacher contributions should survive as the available supervision budget contracts}?

We address this allocation problem with our method, \textbf{GIFT} (\textbf{G}radient \textbf{I}nformed
\textbf{F}il\textbf{T}ering for Distillation Augmented Reasoning RL), which uses token-level gradient geometry to determine where auxiliary distillation participates. Instead of relying on heuristics like disagreement or uncertainty, GIFT directly analyzes the interaction between the teacher's distillation gradient and the student's RL gradient. At each token, we decompose the teacher's update into two key components: (1) a component \textbf{parallel} to the RL gradient, which measures first-order conflict or assistance, and (2) a component \textbf{orthogonal} to the RL gradient, which represents novel directional guidance that cannot be produced by simply rescaling the RL update.
\textbf{Our central hypothesis is that the most effective guidance is both non-conflicting and contains significant novel direction}. However, since privileged teacher guidance (as in OPSD) targets imitation rather than guaranteed correctness, its influence must be carefully managed to avoid biasing the final policy. GIFT addresses this by implementing its geometric selection under a progressively shrinking token budget. This \textbf{annealing} process ensures that as training matures, the model transitions from dense but potentially flawed guidance towards the verifiable outcome signal. Only the most geometrically complementary teacher contributions survive before the model converges to the pure, unprivileged RL objective.
This approach leads to a method that is both principled and effective. Our main contributions are:
\begin{itemize}[leftmargin=*, itemsep=0pt, topsep=0pt, partopsep=0pt]
\item We propose GIFT, a method that uses the geometric relationship between teacher and student gradients to selectively apply distillation. It is the first method to explicitly distinguish between conflicting, redundant, and complementary token-level guidance to solve the allocation problem.
\item We derive a closed-form, token-level scoring rule based on this geometry, allowing for efficient selection without needing to compute auxiliary backward passes. We combine this with a progressive coverage annealing schedule that smoothly transitions from dense guidance to pure reinforcement learning, ensuring stability and preserving the primacy of the outcome-based objective.
\item GIFT achieves the highest average scores on challenging mathematical reasoning benchmarks across four model scales, outperforming eight strong baselines. Ablation studies confirm that its combination of geometric selection and coverage annealing is critical to its success, yielding significant gains over simpler selection criteria or withdrawal strategies.
\end{itemize}

\section{Related Work}

Dense teacher guidance can supplement outcome supervision, but its usefulness depends on how it is incorporated, not simply on how much is provided. We situate GIFT along four related design choices: how teacher information enters the update, which tokens receive it, how its interaction with RL is assessed, and how its influence is withdrawn.

\textbf{Outcome Supervision and Self-Distillation.} RL from verifiable outcomes, or RLVR, forms the basis of our primary objective. GRPO~\citep{grpo} derives group-relative advantages from verified outcomes; subsequent works like Dr. GRPO~\citep{drgrpo} and DAPO~\citep{dapo} refine its normalization and loss aggregation to improve stability and reduce bias. While these methods provide an unbiased, outcome-grounded learning signal, their supervision is sparse and vanishes for rollout groups where all responses are correct or incorrect. On-policy self-distillation (OPSD; \cite{opsd}) offers a dense, token-level alternative, supplying teacher feedback along the student's trajectory. Hybrid methods, such as RLSD~\citep{rlsd}, RLCSD~\citep{pan2026rlcsd}, and AMR-SD~\citep{wei2026amrsdasymmetricmetareflectiveselfdistillation}, attempt to combine these two signals by using teacher-derived signals to perform \textbf{scalar modulation} of the RL advantages. However, they are fundamentally limited in expressiveness: re-weighting the RL gradient cannot introduce an update direction outside its original logit-space span. GIFT uses OPSD-RLVR geometry to prioritize complementary guidance under exact quotas, without excluding all conflicts.

\textbf{Token-Selective Teacher Guidance.} Given a dense teacher signal, a critical question is which tokens should receive it. Many methods rely on statistical heuristics. TIP~\citep{tip} ranks tokens by student entropy and teacher-student divergence; EOPD~\citep{eopd}) gates supervision by teacher entropy; RSTG~\citep{dstg} combines these heuristics for failed rollouts; and CRPO~\citep{crpo}) uses predictive entropy to distinguish exploration from exploitation bias. While these statistics identify uncertainty or disagreement, they do not directly establish compatibility with the current outcome-driven update. Other works highlight that simple agreement is not the goal. RLRT~\citep{rlrt} reinforces successful student departures from the teacher, demonstrating teacher disagreement can be valuable. SKALD~\citep{SKALD} focuses on zero-variance groups where the RL signal is absent, providing distillation as a fallback. This aligns with GIFT's treatment of such cases (Corollary~\ref{cor:stationary}), but GIFT's geometric criterion is more general. Instead of relying on heuristics, GIFT directly assesses the first-order interaction between the teacher's distillation gradient and student's RL gradient at each token, providing a principled selection basis.


\textbf{Gradient Geometry for Auxiliary Learning.} 
The geometric relationship between gradients from multiple objectives is a well-established concept in multi-task learning. Methods like PCGrad \citep{yu2020gradient} project away conflicting gradient components, while ATTITTUD~\citep{dery2021auxiliary} decomposes an auxiliary gradient into components agreeing, conflicting, or orthogonal to the primary task's gradient subspace, and re-weights them. ATTITTUD is the direct intellectual predecessor to our geometric analysis. However, GIFT makes three critical and novel adaptations for the RLVR context: (1) \textbf{Usage}: We use geometry for discrete selection of tokens, not continuous modification of the auxiliary gradient, which avoids distorting the teacher signal. (2) \textbf{Granularity and Efficiency}: Our analysis is performed at the token level in logit space, where we derive a closed-form, O(M) solution (Proposition 1) for the geometric components, avoiding expensive, parameter-space SVD approximations required by ATTITTUD. (3) \textbf{Value of Orthogonality}: We identify the orthogonal component ($b_i$) as the most valuable form of complementary guidance—representing novel directions unreachable by the RL signal (Theorem~\ref{thm:diversity})—whereas ATTITTUD treats it as neutral.

\textbf{Demonstration-Guided RL and Teacher Withdrawal.} Privileged teacher guidance must be carefully managed to avoid biasing the final policy towards imitation. This withdrawal can be implemented in several ways. Methods like LUFFY~\citep{luffy} and UFT~\citep{UFT} introduce teacher guidance through trajectory composition or annotated prefixes, and anneal the length or proportion of this guidance. Other methods anneal the auxiliary loss coefficient ($\lambda$) to zero, as seen in Kickstarting~\citep{Schmitt2018KickstartingDR}, RLSD, and SDPG. However, such \textbf{coefficient annealing} dilutes the signal for all tokens simultaneously, discarding complementary updates along with conflicting ones. Sequential Beats Joint~\citep{li2026sequentialbeatsjointinterplay} avoids this by separating distillation and RLVR into successive stages, but forgoes the benefits of joint training. GIFT addresses this optimization tension through \textbf{coverage annealing}, shrinking the selected-token quota while fixing the loss weight. At each ranking, positive residual scores precede zero scores and conflicts; within conflicts, smaller normalized opposition receives higher priority. This controls teacher participation while preserving retained-token loss weights, before the auxiliary loss vanishes at teacher exit.

\section{Method}
\label{sec:method}

In the introduction, we established the central challenge: to effectively merge the dense, token-level guidance from self-distillation~\citep{sdpg, gkd, minillm} with the sparse, outcome-grounded supervision from reinforcement learning~\citep{grpo, Guo_2025} (RL). Simply combining them is insufficient; we need a principled mechanism to decide \textbf{when} and \textbf{where} to apply the teacher's guidance to ensure it complements, rather than corrupts, the primary RL objective. This section introduces our method, GIFT, which resolves this "allocation problem" through a geometric analysis of the underlying gradients, as shown in Figure~\ref{fig:overview}.

\subsection{A Geometric Approach to Guidance}

\begin{figure}[t]
\vspace{-0.7ex}
    \centering
    \includegraphics[width=0.95\linewidth]{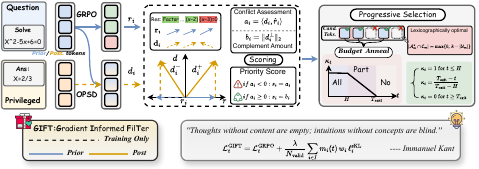}
    \vspace{-1.2ex}
    \caption{GIFT decomposes the distillation gradient into conflict ($a_i$) and complement ($b_i$) components relative to the RL gradient, ranks tokens, and anneals the selection budget from full to zero.}
\vspace{-4.4ex}
\label{fig:overview} 
\end{figure}

Our methodology is built on a direct analysis of the geometric interaction between the teacher and student gradients~\citep{dery2021auxiliary,yu2020gradient} at each token. Let $z_{i} \in \mathbb{R}^{|V|}$ be the student's logits at position $i$, with probability distribution $p_{i} = \text{softmax}(z_{i})$. At this position, we consider two distinct logit-space gradients. The first is the \textbf{outcome-driven gradient} from the clipped GRPO loss~\citep{grpo}, $l_{i}^{\text{GRPO}}$. This gradient, denoted $r_i$, represents the direction of steepest ascent for the RL objective, steering the model toward task success determined by the verifier. The second is the \textbf{imitation-driven gradient} from the distillation loss, $l_{i}^{\text{sKL}}$~\citep{gkd,minillm,opsd,sdpg}. This gradient, denoted $d_i$, points in the direction that best aligns the student's distribution $p_i$ with the privileged teacher's distribution $\tilde{q}_i$. These gradients are:
\begin{equation}
  r_i=\nabla_{z_i}\ell^{\mathrm{GRPO}}_i=w_iA_i\rho_ih_i\bigl(p_i-e_{u_i}\bigr),
  \qquad
  d_i=\nabla_{z_i}\bigl(w_i\ell^{\mathrm{sKL}}_i\bigr)=w_i\bigl(p_i-\tilde q_i\bigr),
  \label{eq:grads}
\end{equation}
where $e_{u_{i}}$ is the one-hot vector for the sampled token, $A_i$ is the advantage, $h_i$ indicates an active clipped branch, and $\tilde{q}_i$ is the teacher distribution. The gradient $r_i$ serves as our reference direction.
On the set of tokens $\mathcal{D}$ where a non-zero RL gradient exists, we decompose the teacher gradient $d_i$ with respect to the normalized RL gradient $\hat{r}_i = r_i / ||r_i||_2$:
\begin{equation}
  a_i=\langle d_i,\hat r_i\rangle,\quad
  d_i^{\perp}=d_i-a_i\hat r_i,\quad
  b_i=\|d_i^{\perp}\|_2.
  \label{eq:decomp}
\end{equation}
Here, the scalar $a_i$ is the signed magnitude of the \textbf{parallel component}, measuring first-order assistance or conflict. The scalar $b_i$ is the norm of the \textbf{orthogonal component}, quantifying novel directional guidance. Together, they satisfy $a_{i}^{2} + b_{i}^{2} = ||d_{i}||_{2}^{2}$. These two scalars form the core of our selection mechanism, as explained next.

\subsection{Safety: The Role of the Parallel Component ($a_i$)}

Our first priority is to ensure that auxiliary guidance does not undermine the primary, outcome-driven objective. The parallel component $a_i$ provides a direct, first-order measure of this safety. Theorem 1 formalizes this intuition.

\begin{theorem}[Local first-order non-interference]
\label{thm:noninterference}
For a given token, the change in the primary GRPO loss from a combined update~\citep{dery2021auxiliary,yu2020gradient} is given by $\frac{d}{d\eta}f_{i}(z_{i}(\eta))|_{\eta=0} = -||r_{i}||_{2}^{2} - \lambda||r_{i}||_{2}a_{i}$. The implications are direct:
\begin{enumerate}[leftmargin=*, itemsep=0pt, topsep=0pt, partopsep=0pt]
    \item \textbf{Assistance ($a_i > 0$)}: The teacher's guidance is aligned with the RL gradient and accelerates the decrease of the primary loss. This guidance is beneficial.
    \item \textbf{Neutrality ($a_i = 0$)}: The teacher's guidance is orthogonal to the RL gradient and has no first-order effect on the primary loss. It is safe, regardless of its magnitude $||d_i||_2$.
    \item \textbf{Conflict ($a_i < 0$)}: The teacher's guidance opposes the RL gradient, slowing or even reversing progress on the primary objective. This guidance is harmful.
\end{enumerate}
\end{theorem}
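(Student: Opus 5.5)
The plan is to fix the setup implicit in the statement and then make a one-line chain-rule computation. I take $f_i(z)=\ell^{\mathrm{GRPO}}_i(z)$ as the primary per-token loss, viewed as a function of the logits $z_i$ alone, so that $\nabla_{z_i} f_i = r_i$ by \eqref{eq:grads}. The combined update is the gradient-descent step along the summed logit-space gradient,
\begin{equation*}
z_i(\eta)=z_i-\eta\bigl(r_i+\lambda d_i\bigr),
\end{equation*}
with $\lambda\ge 0$ the distillation weight and $\eta\ge 0$ the step size. I would state this explicitly at the start, since the sign convention and the fact that the step is taken in logit space are exactly what produce the claimed formula. I restrict to tokens in $\mathcal D$, so $r_i\neq 0$ and $\hat r_i$ is well defined.

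The first step is the directional derivative. Since $f_i$ is differentiable in $z_i$ (softmax and log are smooth, and on a fixed clipping branch $h_i$ the clipped surrogate is locally smooth), the chain rule gives
\begin{equation*}
\frac{d}{d\eta}f_i(z_i(\eta))\Big|_{\eta=0}=\bigl\langle r_i,-(r_i+\lambda d_i)\bigr\rangle=-\|r_i\|_2^2-\lambda\langle r_i,d_i\rangle .
\end{equation*}

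The second step uses the decomposition in \eqref{eq:decomp}. Write $\langle r_i,d_i\rangle=\|r_i\|_2\langle \hat r_i,d_i\rangle=\|r_i\|_2\,a_i$, and note that $d_i^{\perp}$ contributes nothing because $\langle r_i,d_i^{\perp}\rangle=0$ by construction. This gives exactly $-\|r_i\|_2^2-\lambda\|r_i\|_2 a_i$, and it shows that the orthogonal part, of norm $b_i$, has no first-order effect on the primary loss.

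The third step reads off the three cases. Compare with the pure-RL rate $-\|r_i\|_2^2$, and note that $\lambda\|r_i\|_2>0$ whenever $\lambda>0$.
\begin{itemize}
\item If $a_i>0$, the derivative is strictly more negative, so the primary loss decreases faster (assistance).
\item If $a_i=0$, the derivative equals the pure-RL rate for any value of $\|d_i\|_2=b_i$ (neutrality).
\item If $a_i<0$, the decrease is slowed. It is reversed, meaning the derivative is positive, precisely when $a_i<-\|r_i\|_2/\lambda$ (conflict).
\end{itemize}

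The main obstacle is not the computation but rigor about differentiability. The clipped GRPO objective is only piecewise smooth, since the indicator $h_i$ switches at the clipping boundary. I would handle this by assuming $\rho_i$ is not exactly at a clip threshold, so $h_i$ is locally constant and the derivative exists. Alternatively, I would interpret $r_i$ as the one-sided gradient of the active branch. I would also emphasize that the result is local and per-token: it concerns logit-space updates, not parameter-space updates where tokens couple through shared weights.
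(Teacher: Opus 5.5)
Your proposal is correct and follows essentially the same route as the paper: differentiate $\phi(\eta)=f_i(z_i-\eta(r_i+\lambda d_i))$ at $\eta=0$ by the chain rule, substitute $\langle r_i,d_i\rangle=\|r_i\|_2 a_i$, and read off the three cases, including the same reversal threshold $\lambda|a_i|>\|r_i\|_2$. Your explicit remark that the clipping branch must be fixed so that $f_i$ is differentiable is a sensible addition that the paper leaves implicit.
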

This geometric criterion provides a more precise filter than heuristics like student-teacher divergence (e.g., TIP~\citep{tip}) or disagreement. While a large divergence might indicate valuable guidance, it does not distinguish between productive, orthogonal disagreement ($a_i \ge 0, b_i \gg 0$) and destructive, head-on conflict ($a_i < 0$). Our use of $a_i$ makes this crucial distinction.
Therefore, the sign of $a_i$ serves as a fundamental \textbf{safety gate}. We define any token with $a_i < 0$ as conflicting and give it lowest priority. This raises the next question: among the "safe" tokens (where $a_i \ge 0$), which are most \textbf{valuable} to select? This is decided by the orthogonal component, $b_i$.


\subsection{Value: Why the Orthogonal Component ($b_i$) Matters}


Having established safety with the parallel component $a_i$, we now examine value. Among ``safe'' tokens ($a_i \ge 0$), the most useful guidance lies in the orthogonal component, quantified by $b_i$. Crucially, $b_i$'s value does not stem from a direct, first-order contribution to the primary RL objective—as established in Theorem~\ref{thm:noninterference}, its contribution to the primary loss descent is zero. Its profound importance relates instead to the guidance's novelty and expressiveness. 
This value is best understood through its counterpart. A purely parallel teacher gradient ($b_i=0$) is fundamentally redundant. The resulting update, $r_i + \lambda d_i$, simplifies to merely rescaling the original RL gradient $r_i$. This critical limitation offers no new directional information unattainable by simply adjusting the learning rate or advantage scaling for that token. An entirely collinear teacher tells the optimizer nothing it could not learn from the RL signal alone.

All genuinely new geometric information provided by the teacher resides in the orthogonal component $d_i^\perp$. Therefore, selecting for a large $b_i$ is equivalent to prioritizing the most valuable form of teacher guidance—the part that is structurally inaccessible to simpler advantage-modulation schemes. The following theorem formalizes this concept of complementarity in three distinct ways.

\begin{theorem}[Complementarity of the orthogonal residual]
\label{thm:diversity}
Under the assumptions of Theorem~\ref{thm:noninterference}:
\begin{enumerate}[leftmargin=*, itemsep=0pt, topsep=0pt, partopsep=0pt]
    \item \textbf{(Maximal locally GRPO-neutral distillation decrease.)}
    Among unit-norm logit perturbations that are locally neutral to the token-level
    primary surrogate,
    \begin{equation}
        \max_{\lVert v\rVert_2\le 1,\; r_i^\top v=0}
        \bigl(-d_i^\top v\bigr)=b_i.
    \end{equation}
    If $b_i>0$, the maximum is attained by
    $v^*=-d_i^\perp/b_i$. If $b_i=0$, every feasible $v$ attains the value zero.
    Thus, $b_i$ is the maximal first-order distillation decrease among unit-norm
    logit perturbations that are locally neutral to the token-level primary
    surrogate; no finite-step or shared-parameter conclusion follows.
\item \textbf{(Irreducibility to token-local GRPO rescaling.)} Fix the sampled tokens, model state, and reference gradients, and equip $\mathcal H=\bigoplus_{i\in\mathcal I}\mathbb R^{|\mathcal V|}$ with the
Euclidean product norm $\|(x_i)_i\|_{\mathcal H}^2=\sum_i\|x_i\|_2^2$.
Let $\mathcal M_0 =\left\{(c_i r_i)_{i\in\mathcal I}:c_i\in\mathbb R\right\}$ be the space of unrestricted signed, token-local rescalings of the fixed GRPO reference field. For any fixed binary mask $m\in\{0,1\}^{\mathcal I}$, write $\mathcal D_m=(m_i d_i)_{i\in\mathcal I}$. Then
\begin{equation}
\begin{split}
    \operatorname{dist}^2(\mathcal D_m,\mathcal M_0)
    ={}&\sum_{i:r_i\ne 0}m_i
    \left(\|d_i\|_2^2-
    \frac{\langle d_i,r_i\rangle^2}{\|r_i\|_2^2}\right)
    +\sum_{i:r_i=0}m_i\|d_i\|_2^2.
\end{split}
\end{equation}
In particular, for $\epsilon_g\ge0$ and
$\mathcal D_\epsilon=\{i:\|r_i\|_2>\epsilon_g\}$, the residuals defined in Equation~\ref{eq:decomp} satisfy
\begin{equation}
    \operatorname{dist}^2(\mathcal D_m,\mathcal M_0)
    \ge \sum_{i\in\mathcal D_\epsilon}m_i b_i^2.
\end{equation}
Thus, selecting any $i\in\mathcal D_\epsilon$ with $b_i>0$ certifies that the selected teacher field cannot be represented by token-local scalar rescalings of the fixed GRPO field. This is a sufficient, not a necessary, condition. A selected zero-reference block with $d_i\ne0$ also certifies irreducibility, as can a noncollinear small nonzero-reference block.

  \item \textbf{(Gradient diversity.)}
    For a collection of gradients $\{g_j\}_{j=1}^n$ summed into one update,
    define
    \begin{equation}
      \Delta\bigl(\{g_j\}\bigr)
      =\sum_{j}\|g_j\|_2^{2}/\bigl\|\sum_{j}g_j\bigr\|_2^{2}
      \label{eq:diversity}
    \end{equation}
    (Definition~\ref{def:diversity}; $\Delta=1/n$ when the $g_j$ are identical
    and $\Delta=1$ when they are mutually orthogonal, so larger $\Delta$ means
    less redundancy among the summands). Applying this to the two gradients
    combined at token $i$, i.e.\ $\Delta_i=\Delta(\{r_i,\lambda d_i\})$: at fixed
    $\|r_i\|_2$, $\|d_i\|_2$ and $\lambda>0$, $\Delta_i$ is strictly decreasing
    in $a_i$, hence \emph{strictly increasing in $b_i$ on the safe branch}
    $a_i\in[0,\|d_i\|_2]$, where it rises from $(\|r_i\|_2^2+\lambda^2\|d_i\|_2^2)/(\|r_i\|_2+\lambda\|d_i\|_2)^2\in[\tfrac12,1)$
    at $b_i=0$ to $1$ at $a_i=0$. Unconstrained over all $a_i$, however,
    $\Delta_i$ is maximized as $a_i\to-\|d_i\|_2$ and diverges when
    $r_i+\lambda d_i=0$.
\end{enumerate}
\end{theorem}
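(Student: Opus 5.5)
The three parts of Theorem~\ref{thm:diversity} are separate finite-dimensional linear-algebra facts, and we would prove them one at a time using only the decomposition in Equation~\ref{eq:decomp}. Throughout we use $d_i=a_i\hat r_i+d_i^\perp$ with $\langle d_i^\perp,r_i\rangle=0$ and $a_i^2+b_i^2=\|d_i\|_2^2$.

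\textbf{Part 1.} Take any feasible $v$, so $r_i^\top v=0$ and $\|v\|_2\le1$. Then $-d_i^\top v=-(d_i^\perp)^\top v$, and Cauchy--Schwarz gives $-(d_i^\perp)^\top v\le b_i\|v\|_2\le b_i$. If $b_i>0$, the vector $v^*=-d_i^\perp/b_i$ has unit norm, is orthogonal to $r_i$, and yields $-d_i^\top v^*=\|d_i^\perp\|_2^2/b_i=b_i$, so the bound is attained. If $b_i=0$, then $d_i^\perp=0$ and $d_i^\top v=a_i\hat r_i^\top v=0$ for every feasible $v$.

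\textbf{Part 2.} The norm on $\mathcal H$ is a sum of squared block norms, and $\mathcal M_0$ is a product of the lines $\mathrm{span}(r_i)$, or of $\{0\}$ when $r_i=0$, with an independent coefficient $c_i$ in each block. The distance therefore decouples:
\[
\operatorname{dist}^2(\mathcal D_m,\mathcal M_0)=\sum_i\min_{c_i}\|m_id_i-c_ir_i\|_2^2 .
\]
We evaluate each block minimum as follows.
\begin{itemize}[leftmargin=*, itemsep=0pt, topsep=0pt, partopsep=0pt]
\item If $r_i\ne0$, we project onto the line spanned by $r_i$. Since $m_i^2=m_i$, the minimum equals $m_i\bigl(\|d_i\|_2^2-\langle d_i,r_i\rangle^2/\|r_i\|_2^2\bigr)$.
\item If $r_i=0$, every choice of $c_i$ gives the same value $m_i\|d_i\|_2^2$.
\end{itemize}
Summing over blocks gives the identity. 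For the inequality, on $\mathcal D_\epsilon$ we have $r_i\ne0$, and the block term equals $m_ib_i^2$ by Equation~\ref{eq:decomp}. All remaining block terms are nonnegative, so dropping them gives the lower bound. Consequently, a single selected $i\in\mathcal D_\epsilon$ with $b_i>0$ forces a positive distance, so $\mathcal D_m\notin\mathcal M_0$. The remarks on zero-reference blocks and small noncollinear blocks follow directly from the same identity.

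\textbf{Part 3.} Write $R=\|r_i\|_2$ and $D=\|d_i\|_2$. Then
\[
\Delta_i=\frac{R^2+\lambda^2D^2}{R^2+\lambda^2D^2+2\lambda Ra_i},
\]
because $\langle r_i,d_i\rangle=Ra_i$. With $R$, $D$ and $\lambda>0$ fixed, the numerator is a positive constant and the denominator is strictly increasing in $a_i$. On the range $a_i>-(R^2+\lambda^2D^2)/(2\lambda R)$, which contains the safe branch, the denominator is positive, so $\Delta_i$ is strictly decreasing in $a_i$. On $[0,D]$ we have $b_i=\sqrt{D^2-a_i^2}$, which is strictly decreasing in $a_i$, so $\Delta_i$ is strictly increasing in $b_i$. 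The endpoints are:
\begin{itemize}[leftmargin=*, itemsep=0pt, topsep=0pt, partopsep=0pt]
\item at $a_i=D$ (where $b_i=0$), $\Delta_i=(R^2+\lambda^2D^2)/(R+\lambda D)^2$;
\item at $a_i=0$, $\Delta_i=1$.
\end{itemize}
The claim that the first endpoint lies in $[\tfrac12,1)$ follows from $(R-\lambda D)^2\ge0$ together with $2\lambda RD>0$. The second inequality needs $R,D>0$: $R>0$ holds because the theorem works on $\mathcal D$, and we would note the standing assumption $d_i\ne0$.

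Over all admissible $a_i\in[-D,D]$, monotonicity puts the maximum at $a_i=-D$. There $d_i$ is antiparallel to $r_i$ and the denominator equals $(R-\lambda D)^2$, which vanishes exactly when $r_i+\lambda d_i=0$; that is the divergence case.

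The only step needing real care is keeping track of degenerate cases: $b_i=0$ in Part 1, $r_i=0$ blocks in Part 2, and positivity of the denominator in Part 3. The hardest of these is Part 3, where we would state explicitly that we work on $a_i>-D$, or away from $r_i+\lambda d_i=0$, so that $\Delta_i$ is well defined.
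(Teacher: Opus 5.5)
Your proposal is correct and follows the paper's proof essentially step for step: Cauchy--Schwarz on $d_i^\perp$ with attainment at $v^*=-d_i^\perp/b_i$ for Part 1, blockwise orthogonal projection onto $\bigoplus_i\operatorname{span}(r_i)$ using $m_i^2=m_i$ for Part 2, and the explicit form $\Delta_i=(R^2+\lambda^2D^2)/(R^2+\lambda^2D^2+2\lambda R a_i)$ composed with $b_i=\sqrt{D^2-a_i^2}$ for Part 3. You are, if anything, slightly more careful than the paper in two places: you track where the denominator is positive, and you state the assumption $d_i\neq0$ that the endpoint claim needs. By contrast, the paper asserts strict monotonicity on all of $[-\|d_i\|_2,\|d_i\|_2]$, although $\Delta_i$ is undefined at the left endpoint when $\|r_i\|_2=\lambda\|d_i\|_2$.
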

This theorem justifies prioritizing the orthogonal component $b_i$ for three reasons. First, $b_i$ measures maximal progress on the imitation objective that is ``free'' to the outcome-driven task, having no first-order cost to the primary RL objective. Second, it represents guidance advantage-modulation methods like RLSD~\citep{rlsd} and OPDVR~\citep{opdvr} cannot express, since they only rescale the RL gradient without introducing novel directions. Third, in sparse-reward RLVR~\citep{grpo,Guo_2025}, a large $b_i$ indicates high gradient diversity ($\Delta_i$), injecting new dimensions to prevent stagnation. However, the most diverse signal directly opposes the primary gradient ($a_i < 0$), which would be catastrophic. Our method resolves this by assigning conflicting signals negative scores, ensuring their early discard as the token budget anneals. This provides early dense guidance while progressively enforcing safety.

GIFT resolves this tension by framing token selection as an  
\emph{exact-quota lexicographic} program: at each step, a quota of $k$ eligible tokens must be selected, and GIFT chooses the size-$k$ subset of the eligible set $E_m$ that lexicographically minimizes
\begin{equation}
\label{eq:lexprogram}  
A^{\star}_m \, \in \,  \arg\min\nolimits^{\mathrm{lex}}_{A \subseteq E_m,  
|A| = k} \Big( N^{-}(A),\, O^{-}(A),\, -E^{+}(A) \Big),
\end{equation}
where $C_m = \{i \in E_m \cap \mathcal D : a_i < 0\}$ collects the conflicting eligible tokens, $U_m = E_m \setminus C_m$ the remaining eligible ones, $N^{-}(A) = |A \cap C_m|$ their number, $O^{-}(A) = \sum_{i \in A \cap  
C_m} |a_i|$ their \emph{normalized opposition}, and $E^{+}(A) = \sum_{i \in A \cap U_m \cap D} b_i^2$ the injected orthogonal guidance.


\subsection{Scoring and Progressive Selection}


The lexicographic exact-quota program in Equation~\ref{eq:lexprogram}
admits an exact, closed-form solution in the form of a piecewise scoring
rule, providing a principled ranking for all eligible tokens.
Each token receives a score $s_i$ defined as:
\begin{equation}
  s_i=
  \begin{cases}
    a_i, & i\in\mathcal{D},\ a_i<0,\\
    b_i, & i\in\mathcal{D},\ a_i\ge0,\\
    0,   & i\in\mathcal{E}\setminus\mathcal{D},
  \end{cases}
  \label{eq:score}
\end{equation}
This structure imposes a strict hierarchy that directly reflects our optimization criteria. For tokens with conflicting guidance ($a_i < 0$), the score is negative, ensuring they are ranked lowest. For all safe tokens ($a_i \ge 0$), the score is simply the orthogonal magnitude $b_i$, prioritizing novel, complementary information. Tokens with no usable RL gradient ($i \notin \mathcal{D}$) receive a neutral score of zero.
The optimality of this ranking is formally guaranteed by the following theorem.
\begin{theorem}[Exact-quota lexicographic optimality]
\label{thm:selection}
With the notation of Equation~\ref{eq:lexprogram}, for any integer quota
$k \in \{0, \ldots, |E_m|\}$, the set $A^{\star}_m$ of $k$ highest scores
lexicographically minimizes $(N^{-}, O^{-}, -E^{+})$ over size-$k$
subsets, and
\begin{equation}
|A^{\star}_m \cap C_m| = \max\{0,\, k - |U_m|\}.
\label{eq:gift-selected-conflicts}
\end{equation}
If $k \le |U_m|$ the selection is conflict-free and maximizes $E^{+}$
over size-$k$ subsets of $U_m$; otherwise it takes $U_m$ plus the
$k - |U_m|$ conflicting tokens of smallest normalized opposition
$|a_i|$. Both hold under arbitrary tie-breaking.
\end{theorem}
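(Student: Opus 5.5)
The plan is to prove Theorem~\ref{thm:selection} by an exchange argument that works level by level through the lexicographic objective. First I fix the score structure. By Equation~\ref{eq:score}, every $i\in C_m$ has $s_i=a_i<0$. Every $i\in U_m$ has $s_i\ge 0$: either $s_i=b_i\ge0$ for $i\in U_m\cap\mathcal D$, or $s_i=0$ for $i\in U_m\setminus\mathcal D$. So every score in $U_m$ is at least every score in $C_m$, and the inequality is strict. This holds no matter how ties are broken. Hence a top-$k$ set $A^\star_m$ has the following form. If $k\le|U_m|$, it lies inside $U_m$. If $k>|U_m|$, it contains all of $U_m$ together with the $k-|U_m|$ elements of $C_m$ with the largest $a_i$, which are those with the smallest $|a_i|$. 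This already gives Equation~\ref{eq:gift-selected-conflicts}.

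Next I would show lexicographic optimality one coordinate at a time. \emph{First coordinate.} Any size-$k$ set $A$ satisfies $|A\cap C_m|\ge k-|A\cap U_m|\ge k-|U_m|$, and also $|A\cap C_m|\ge 0$. So $N^-(A)\ge\max\{0,k-|U_m|\}=N^-(A^\star_m)$. It follows that every lexicographic minimizer has exactly $N^\star:=\max\{0,k-|U_m|\}$ conflicts.

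\emph{Case $k\le|U_m|$.} Here $N^\star=0$, so competing sets lie inside $U_m$ and $O^-=0$ for all of them. It remains to maximize $E^+(A)=\sum_{i\in A\cap\mathcal D}b_i^2$. On $U_m\cap\mathcal D$ the score is $b_i\ge0$ and $b_i\mapsto b_i^2$ is monotone, while elements of $U_m\setminus\mathcal D$ contribute $0$ and have score $0$. Under this reduction, $E^+(A)=\sum_{i\in A}s_i^2$ with $s_i\ge0$. A standard swap then finishes the case: if $A$ omits some $j$ and contains some $i$ with $s_j>s_i$, replacing $i$ by $j$ does not decrease $E^+$. So the top-$k$ set attains the maximum. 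Ties in score mean ties in $s_i^2$, which is why arbitrary tie-breaking gives the same optimal value.

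\emph{Case $k>|U_m|$.} Every set achieving $N^\star$ must contain all of $U_m$ (otherwise it would need more than $k-|U_m|$ conflicts). Such sets therefore share the value of $E^+$, and differ only in which $N^\star$ conflicting tokens they take. Minimizing $O^-=\sum|a_i|$ over $N^\star$-subsets of $C_m$ is achieved by taking the smallest $|a_i|$, again by a swap argument. These are exactly the conflicting tokens with the highest scores $a_i$. With $O^-$ minimized, the third coordinate is constant, so $A^\star_m$ is a lexicographic minimizer. Ties in $|a_i|$ leave $O^-$ unchanged, so tie-breaking is again irrelevant.

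The main obstacle is the bookkeeping in the first case: I must verify that tokens in $U_m\setminus\mathcal D$ (score $0$) and tokens in $U_m\cap\mathcal D$ with $b_i=0$ behave identically in $E^+$. The delicate point is that placing zero-score tokens ahead of or behind $b_i=0$ tokens must not change $E^+$, and this needs checking. I would treat it explicitly by writing $E^+(A)=\sum_{i\in A\cap U_m}s_i^2$ before running the swap. With that identity in hand, the remaining steps are routine.
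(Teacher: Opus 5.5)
Your proposal is correct and takes essentially the same route as the paper's proof. Both arguments use the same steps in the same order: the strict score separation between $U_m$ and $C_m$, the bound $N^-(A)\ge\max\{0,k-|U_m|\}$, and then a case split on whether $k$ exceeds $|U_m|$ (above it, every first-coordinate minimizer contains $U_m$, so $E^+$ is fixed and only $O^-$ matters; otherwise minimizers lie in $U_m$). Your identity $E^+(A)=\sum_{i\in A\cap U_m}s_i^2$ is exactly the paper's $\beta_i$ device for zero-score tokens, and the paper replaces your swap argument with the equivalent remark that squaring is order-preserving on $[0,\infty)$.
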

Theorem~\ref{thm:selection} provides the formal guarantee that this scoring rule is the exact solution to the stated lexicographic exact-quota problem: it first minimizes the number of conflicting tokens admitted, then their normalized opposition, and only then maximizes the injected orthogonal  
guidance ($\sum_i b_i^2$).


This static ranking is embedded within a dynamic withdrawal strategy, implemented by progressively shrinking the token budget $k_t$ over the course of training via an annealing schedule.
\begin{equation}
\kappa_t = \frac{T_{\text{exit}} - t}{T_{\text{exit}} - H} \quad \text{for } H < t < T_{\text{exit}},
\label{eq:kappa}
\end{equation}
where $\kappa_t$ is the retention ratio, $H$ is an initial full-coverage phase, and $T_{\text{exit}}$ is the step at which all teacher guidance is withdrawn. This annealing of the \textbf{coverage}, rather than a loss coefficient, makes the trade-off between exploration and exploitation time-varying. Early in training, when the budget is large and the RL signal is sparse, the model benefits from broad, diverse guidance. As training matures and the budget contracts, the lowest-scoring tokens—the most conflicting ones—are the first to be excluded. The most complementary tokens (high $b_i$, $a_i \ge 0$) are retained the longest.

This provides a principled, state-aware withdrawal order. Unlike uniform coefficient annealing~\citep{sdpg, rlsd} which dilutes all guidance simultaneously, or staged training~\citep{li2026sequentialbeatsjointinterplay} which forgoes joint optimization, GIFT's geometric ranking and coverage annealing ensures that as the supervision budget contracts, it is precisely the most harmful guidance (lowest-scoring tokens) that is shed first, while the most complementary guidance is retained at full strength.

\subsection{The GIFT Objective}

The principles of geometric selection and progressive withdrawal are implemented in the final GIFT objective function:
\begin{equation}
  \mathcal{L}^{\mathrm{GIFT}}_t=\mathcal{L}^{\mathrm{GRPO}}_t
  +\frac{\lambda}{N_{\mathrm{valid}}}\sum_{i\in I}m_i(t)\,w_i\,\ell^{\mathrm{sKL}}_i,
  \qquad \lambda=1 .
  \label{eq:objective}
\end{equation}
Here, $m_i(t)$ selects the top-$k_{t,m}$ eligible tokens per sub-batch; $\mathcal I$ contains all valid positions and $N_{\rm valid}=|\mathcal I|$.
Several critical design choices apply.
First, fixing $\lambda=1$ ensures selected tokens receive full-strength, unprojected distillation loss. Control operates via \textbf{selection} (who receives guidance), not \textbf{dilution} (guidance strength). The teacher loses reach, not voice.
Second, normalizing over all valid tokens ($N_{\text{valid}}$) rather than just selected ones ensures stability, keeping each retained token's contribution independent of budget size. Otherwise, a shrinking budget would artificially inflate surviving tokens, undermining smooth withdrawal.
Finally, as the budget anneals to zero ($t \ge T_{\text{exit}}$), the mask $m_i(t)$ becomes universally zero, guaranteeing a clean transition. The auxiliary term vanishes, reducing $\mathcal{L}_t^{\text{GIFT}}$ identically to the native GRPO loss. This grounds final behavior purely in the outcome-driven objective. The teacher acts as a temporary, geometrically-filtered scaffold, harnessing dense guidance before complete withdrawal without corrupting the final policy.

\section{Experiments}


\providecommand{\giftgain}[1]{\,{\scriptsize\textcolor{green!45!black}{(+#1)}}}
\begin{table*}[t]
\vspace{-0.5ex}
\centering
\caption{Mathematical reasoning across Qwen3 scales (mean@12, \%). Avg. is the mean over AMC23, AIME24, and AIME25. Green denotes relative gains over BASE; bold and underlined values indicate the best and second-best distinct scores. Bench. denotes benchmark.
}
\label{tab:gift-math}
\vspace{-1.5ex}
\resizebox{0.98\textwidth}{!}{%
\begin{tabular}{llccccccccc>{\columncolor{gray!12}}c}
\toprule
Size & Bench. & BASE & GRPO & OPSD & SDPO & RLSD & RLCSD & OPDVR & LUFFY & UFT & \textbf{GIFT} \\
\midrule
\multirow{4}{*}{1.7B} & AMC23 & 75.50 & 75.70 & 75.60 & 75.30 & 75.30 & \underline{75.80} & \textbf{76.51} & 75.40 & 74.60 & \underline{75.80}\giftgain{0.40\%} \\
& AIME24 & 44.44 & 45.28 & 45.56 & 44.72 & 44.72 & 46.94 & 46.67 & \underline{48.33} & \underline{48.33} & \textbf{48.61}\giftgain{9.38\%} \\
& AIME25 & 35.00 & 35.56 & 33.33 & \underline{36.67} & 34.72 & 34.72 & 35.83 & 34.17 & \underline{36.67} & \textbf{38.33}\giftgain{9.51\%} \\
& Avg. & 51.65 & 52.18 & 51.50 & 52.23 & 51.58 & 52.49 & 53.00 & 52.63 & \underline{53.20} & \textbf{54.25}\giftgain{5.03\%} \\
\midrule
\multirow{4}{*}{4B} & AMC23 & 88.45 & 89.16 & 88.25 & 88.96 & 88.86 & \textbf{90.26} & 88.45 & 89.16 & \underline{89.66} & 88.55\giftgain{0.11\%} \\
& AIME24 & 71.67 & \underline{74.17} & 73.06 & 70.83 & 73.61 & 72.22 & 73.61 & \textbf{76.11} & 70.56 & 73.33\giftgain{2.32\%} \\
& AIME25 & 63.89 & 63.61 & \underline{67.22} & \underline{67.22} & 63.61 & 64.17 & 64.72 & 65.28 & 64.72 & \textbf{69.72}\giftgain{9.13\%} \\
& Avg. & 74.67 & 75.65 & 76.18 & 75.67 & 75.36 & 75.55 & 75.59 & \underline{76.85} & 74.98 & \textbf{77.20}\giftgain{3.39\%} \\
\midrule
\multirow{4}{*}{8B} & AMC23 & 89.16 & 89.36 & 88.76 & 89.56 & 88.76 & 89.26 & \textbf{89.96} & \textbf{89.96} & 88.86 & \underline{89.66}\giftgain{0.56\%} \\
& AIME24 & 75.00 & 74.72 & \textbf{77.22} & 75.83 & 75.28 & \underline{76.11} & 74.44 & 73.33 & \underline{76.11} & \underline{76.11}\giftgain{1.48\%} \\
& AIME25 & 65.83 & 65.83 & 66.39 & 66.39 & \underline{69.72} & 68.06 & \underline{69.72} & 67.50 & 66.39 & \textbf{71.11}\giftgain{8.02\%} \\
& Avg. & 76.67 & 76.64 & 77.46 & 77.26 & 77.92 & 77.81 & \underline{78.04} & 76.93 & 77.12 & \textbf{78.96}\giftgain{2.99\%} \\
\bottomrule
\end{tabular}}
\vspace{-4.5ex}
\end{table*}

We conduct experiments to validate GIFT's core principles. Our evaluation addresses three central questions: (1) \textbf{Effectiveness}: Does GIFT's approach of gradient-informed filtering and progressive withdrawal achieve state-of-the-art performance on challenging reasoning tasks compared to existing methods? (2) \textbf{Generalization}: Do GIFT's benefits extend across different architectures, scales, and reasoning domains, including out-of-distribution scenarios? (3) \textbf{Component Necessity}: Are GIFT's key components—specifically the geometric scoring rule based on conflict ($a_i$) and novelty ($b_i$), and the coverage annealing schedule—critical to its success? 

\subsection{Experimental Setup}
\label{sec:gift-experimental-setup}

\textbf{Models and tasks.}
We evaluate on two families of models: three scales of Qwen3 (1.7B, 4B, 8B)~\citep{yang2025qwen3technicalreport}) and OLMo-3-7B-Think~\citep{olmo2026olmo3}. For mathematical reasoning, we train on the DeepMath-103K dataset~\citep{he2025deepmath103klargescalechallengingdecontaminated}, using progressively harder difficulty splits for larger models, and evaluate on the AMC23~\citep{mathai_amc23}, AIME24~\citep{aime24}, and AIME25~\citep{aime25} benchmarks. For logical reasoning, we train on Knights-and-Knaves puzzles~\citep{xie2025logicrlunleashingllmreasoning} (4-8 roles) and evaluate on both in-distribution (ID) and out-of-distribution (OOD, 9-11 roles) splits to test generalization.

\textbf{Baselines.}
We compare GIFT with the initial model (BASE) and eight training baselines, grouped by supervision source and how it enters optimization (Appendix~\ref{app:baselines}): GRPO, using verifier-derived advantages without teacher feedback~\citep{grpo}; distributional self-distillation, matching next-token distributions of a conditioned self-teacher (OPSD~\citep{opsd}, SDPO~\citep{sdpo}); verifier-guided token-level credit assignment, modulating advantages or gating rewards with teacher-derived scalars (RLSD~\citep{rlsd}, RLCSD~\citep{pan2026rlcsd}, OPDVR~\citep{opdvr}); and demonstration-guided methods (LUFFY~\citep{luffy}, UFT\citep{UFT}).

\textbf{Evaluation.}
For mathematics, we report mean@12, the average correctness over 12 sampled responses per problem, averaged across problems, using rule-based answer matching. For logic, we report pass@1 on 500 ID problems and 100 problems for each
of the three OOD role counts (9, 10, and 11).
The reported Avg. equally weights the three mathematical benchmarks or the four logical-reasoning splits. All scores are percentages, and gains are relative percentage-point differences.



\subsection{Main Results}
\label{sec:gift-main-results}

Our main results demonstrate that GIFT's principled filtering mechanism consistently outperforms existing methods across multiple tasks and model scales, providing a superior solution to the core tension between dense guidance and outcome-grounded learning.

\begin{table*}[t]
\vspace{-0.7ex}
\centering
\caption{Logical reasoning on Knights-and-Knaves (pass@1, \%).
The ID split contains 4--8 roles, while the OOD splits contain 9, 10, and 11 roles.
Green denotes relative gains over BASE.}
\label{tab:gift-logic}
\vspace{-1.5ex}
\resizebox{\textwidth}{!}{%
\begin{tabular}{lccccccccc>{\columncolor{gray!12}}c}
\toprule
Benchmark & BASE & GRPO & OPSD & SDPO & RLSD & RLCSD & OPDVR & LUFFY & UFT & \textbf{GIFT} \\
\midrule
\multicolumn{11}{l}{\textit{In-distribution (ID)}} \\
\quad 4--8 roles & 72.40 & 73.40 & 72.80 & \textbf{75.20} & 73.60 & 74.00 & 73.00 & \underline{74.60} & 74.20 & 73.40\giftgain{1.38\%} \\
\midrule
\multicolumn{11}{l}{\textit{Out-of-distribution (OOD)}} \\
\quad 9 roles & 69.00 & 72.00 & \textbf{75.00} & \underline{73.00} & 71.00 & 69.00 & 71.00 & 67.00 & 71.00 & \underline{73.00}\giftgain{5.80\%} \\
\quad 10 roles & \underline{59.00} & 56.00 & \underline{59.00} & 56.00 & 55.00 & \underline{59.00} & 57.00 & \underline{59.00} & 56.00 & \textbf{61.00}\giftgain{3.39\%} \\
\quad 11 roles & 42.00 & \textbf{46.00} & 41.00 & 40.00 & 41.00 & \underline{44.00} & \underline{44.00} & 43.00 & 40.00 & \underline{44.00}\giftgain{4.76\%} \\
\midrule
Avg. & 60.60 & 61.85 & \underline{61.95} & 61.05 & 60.15 & 61.50 & 61.25 & 60.90 & 60.30 & \textbf{62.85}\giftgain{3.71\%} \\
\bottomrule
\end{tabular}}
\vspace{-3.5ex}
\end{table*}

\textbf{Mathematical Reasoning.}
As shown in Table~\ref{tab:gift-math}, GIFT achieves the highest average score on mathematical reasoning across all three Qwen3 models. Compared to the base model, it achieves average relative gains of 5.03\%, 3.39\%, and 2.99\% for the 1.7B, 4B, and 8B models, respectively. The inferiority of the baselines stems from their distinct, yet suboptimal, strategies for combining supervision.

The two extremes of the supervision spectrum, pure RL and pure distillation, highlight the core challenge. GRPO provides an unbiased, outcome-grounded learning signal, which is why we retain it as our primary objective. However, its supervision is sparse—a single reward for an entire trajectory—and its group-relative signal vanishes in uniformly correct or incorrect groups, leading to sample inefficiency and learning stagnation. Conversely, on-policy self-distillation offers dense, token-level guidance but optimizes for imitation of a privileged teacher rather than verifiable correctness: OPSD conditions its self-teacher on a reference solution, while SDPO conditions it on environment feedback or verified successful rollouts. In both, the auxiliary loss is applied unconditionally along the entire trajectory—with no check for compatibility with the outcome-driven update and no mechanism for withdrawing teacher influence.

Hybrid methods attempt to bridge this gap, but their approaches lead to distinct compromises. One family, including RLSD, RLCSD, and OPDVR, ultimately converts the teacher signal into a scalar that modulates the RL update: RLSD and RLCSD use teacher-derived signals to modulate verifier-based advantages, while OPDVR reformulates the implicit reward of sampled-token distillation based on trajectory correctness and applies ReLU gating, so that the teacher enters the update only through a per-token scalar reward. By collapsing the teacher's rich distributional information into a scalar reward or weight, these methods are structurally incapable of introducing novel update directions, as established in our theoretical analysis (Theorem~\ref{thm:diversity}(ii)); they can only re-weight the token-level components of the existing RL update, whereas GIFT's auxiliary loss can provide valuable orthogonal guidance ($b_i$). Other methods like LUFFY and UFT introduce teacher guidance through trajectory composition or annotated prefixes, which likewise lack the fine-grained, geometry-aware conflict checking that GIFT performs at every token.

In contrast, GIFT synthesizes the strengths of these approaches: it complements GRPO's reliable but sparse signal with dense, expressive guidance from an auxiliary loss, but subjects that guidance to a rigorous geometric filter. This unique combination explains its consistent, high-level performance, particularly on the challenging AIME25 benchmark where it establishes a new state-of-the-art.

\textbf{Logical Reasoning and Generalization.} On the Knights-and-Knaves task (Table~\ref{tab:gift-logic}), GIFT again achieves the highest average score, improving over the base model with a relative gain of 3.71\%. Crucially, this improvement is not confined to the in-distribution test set; GIFT shows a substantial average relative gain of 4.70\% on the OOD splits. This indicates that the reasoning capabilities learned via gradient-informed guidance generalize to more complex problems not seen during training. The consistent failure of baselines to match this performance further underscores our analysis: methods that are either structurally limited (scalar modulation), ungrounded (pure imitation), or lack a robust filtering mechanism (unfiltered or coarse-grained guidance) cannot as effectively harness teacher guidance to build generalizable reasoning skills. This result, combined with strong performance on the OLMo-3 model family (Appendix~\ref{appendix: additional_experi}, Figure~\ref{fig:gift-olmo}), confirms that GIFT's mechanism is not specific to a single model architecture or reasoning domain but offers a generally applicable framework for improving complex reasoning.

\begin{table*}[t]
\vspace{-0.5ex}
\centering
\caption{
Ablation of token-selection strategies.
Mathematical reasoning uses Qwen3-8B and is evaluated by mean@12.
Logical reasoning uses Qwen3-4B and is evaluated by pass@1.
}
\label{tab:selection_ablation}
\vspace{-1.5ex}
\resizebox{0.99\textwidth}{!}{%
\begin{tabular}{lccccccccc}
\toprule
\multirow{2}{*}{Method}
& \multicolumn{4}{c}{\textbf{Mathematical Reasoning} (mean@12)}
& \multicolumn{5}{c}{\textbf{Logical Reasoning} (pass@1)}
\\
\cmidrule(lr){2-5}
\cmidrule(lr){6-10}
& AMC23 & AIME24 & AIME25 & Avg.
& ID (4--8) & OOD-9 & OOD-10 & OOD-11 & Avg.
\\
\midrule

All Tokens
& \textbf{89.96} & 76.11 & 66.39 & 77.49
& \textbf{73.80} & 69.00 & 59.00 & 45.00 & 61.70
\\

Random Tokens
& 89.66 & 75.28 & 66.67 & 77.20
& 73.60 & 71.00 & 58.00 & 46.00 & 62.15
\\

Entropy-KL Selection
& 89.36 & 75.28 & 66.11 & 76.92
& 71.80 & 63.00 & 54.00 & \textbf{52.00} & 60.20
\\

Gradient-cosine Selection
& 89.56 & \textbf{77.50} & 65.83 & 77.63
& 72.80 & 68.00 & 61.00 & 39.00 & 60.20
\\

\midrule

Full Model
& 89.66 & 76.11 & \textbf{71.11} & \textbf{78.96}
& 73.40 & \textbf{73.00} & \textbf{61.00} & 44.00 & \textbf{62.85}
\\

\bottomrule
\end{tabular}%
}
\vspace{-1ex}
\end{table*}

\begin{table*}[t]
\centering
\begin{minipage}[t]{0.6\textwidth}
\centering
\caption{Ablation of selection strategies on Qwen3-8B. Mean@12 scores on AMC23, AIME24, and AIME25.}
\label{tab:selection_strategy}
\vspace{-1.5ex}
\resizebox{\textwidth}{!}{%
\begin{tabular}{l cccc c}
\toprule
Benchmark & Conflict-only & Residual-only & Teacher-norm & Conflict-Free & \textbf{GIFT} \\
\midrule
AMC23   & 89.86 & 89.26 & 88.76 & \textbf{89.96} & 89.66 \\
AIME24  & 75.56 & 75.00 & 73.89 & \textbf{76.11} & \textbf{76.11} \\
AIME25  & 65.83 & 66.67 & 68.06 & 67.11 & \textbf{71.11} \\
\midrule
Avg.    & 77.08 & 76.97 & 76.90 & 77.73 & \textbf{78.96} \\
\bottomrule
\end{tabular}%
}
\end{minipage}%
\hfill 
\begin{minipage}[t]{0.35\textwidth}
\centering
\caption{Ablation of token selection {scopes} on Qwen3-8B.}
\label{tab:selection_scope}
\vspace{-1.5ex}
\resizebox{\textwidth}{!}{%
\begin{tabular}{l cc c}
\toprule
Benchmark & Per-rollout & Per-prompt & \textbf{GIFT} \\
\midrule
AMC23   & 89.09 & 89.56 & \textbf{89.66} \\
AIME24  & 75.28 & \textbf{76.39} & 76.11 \\
AIME25  & 67.78 & 66.39 & \textbf{71.11} \\
\midrule
Avg.    & 77.37 & 77.45 & \textbf{78.96} \\
\bottomrule
\end{tabular}%
}
\end{minipage}
\vspace{-3ex}
\end{table*}

\subsection{Ablation Study}
\label{sec:gift-token-selection-ablation}
Here We present key findings from the ablations detailed in Appendix~\ref{app:gift-ablations}.


\textbf{The Geometric Scoring Rule is Essential.}  Table~\ref{tab:selection_ablation} shows heuristic criteria (e.g., entropy-KL divergence, raw gradient cosine similarity) fail to match GIFT, confirming statistical proxies for disagreement are less effective than our direct geometric analysis. Table~\ref{tab:selection_strategy} shows its constituent parts are necessary, though their importance varies with task difficulty. On simpler AMC23 and AIME24 benchmarks, ``Conflict-Free'' and ``Conflict-only'' ablations perform comparably or better than GIFT. When the primary RL gradient is reliable, simpler strategies prioritizing safety by strictly rejecting conflicts (``Conflict-Free'') or accelerating agreement (``Conflict-only'') are highly effective. However, these strategies falter on the challenging AIME25 benchmark, where GIFT excels. The ``Residual-only'' ablation—prioritizing novel guidance ($b_i$) while ignoring conflict signs—underperforms across all settings, demonstrating pursuing novelty without safety checks is detrimental. Conversely, ``Conflict-Free'' proves too restrictive on AIME25, likely discarding valuable teacher guidance needed to escape local optima simply because it appears conflicting. GIFT’s superiority on AIME25 stems from its synthesis: using the conflict signal ($a_i$) for safety while leveraging its exact-quota system to judiciously admit the least-opposing conflicting tokens when necessary, providing a vital mechanism for exploration. These results confirm GIFT's complete geometric scoring rule provides a more robust approach than isolated heuristics, validating a design that places conflicting tokens below zero- and positive-score candidates.

\textbf{Coverage Annealing is Superior to Alternative Anealing Strategies.} Withdrawing teacher influence is as critical as applying it. We thus compare GIFT's \textbf{coverage annealing} with two alternatives. First, \textbf{sequential training}~\citep{li2026sequentialbeatsjointinterplay} trains with OPSD distillation before switching to pure GRPO. Second, \textbf{weight annealing} combines objectives but uniformly applies distillation, gradually decaying the loss coefficient $\lambda$ to zero. As shown in Table~\ref{tab:gift-schedule} (Appendix~\ref{appendix: additional_experi}), GIFT significantly outperforms both strategies, averaging 1.59 and 2.87 percentage point gains, respectively. Unlike sequential training, GIFT benefits from joint optimization synergy. Furthermore, unlike indiscriminate weight annealing, GIFT's coverage annealing preserves the most valuable teacher signals at full strength while selectively pruning unhelpful ones. This confirms that selective, state-aware withdrawal is superior to both staged training and uniform decay.

\textbf{A Broader Allocation Scope is More Effective.}
We find that allocating the token budget across the entire \textbf{training sub-batch} outperforms narrower scopes like per-rollout or per-prompt (Table~\ref{tab:selection_scope}). This allows for a more efficient global allocation of supervision, enabling tokens to compete for guidance across different prompts and responses. The model can thus focus the limited supervision budget on the most complementary updates available in the entire sub-batch, rather than being constrained by a locally optimal but globally suboptimal choice.

\section{Conclusion}
We introduced GIFT, a method that resolves sparse RL and dense distillation conflicts by filtering teacher guidance via token-level gradient geometry, decomposing distillation gradients into components parallel (conflict) and orthogonal (novelty) to the RL gradient for principled scoring. Paired with coverage annealing, this selectively withdraws conflicting updates, smoothly transitioning to a pure RL objective. On complex reasoning benchmarks, GIFT consistently outperforms strong baselines, validating this integration of dense supervision.



\subsection*{AI use statement}
During the preparation of this manuscript, large language models (LLMs) were utilized solely to refine grammar, readability, and formatting. The core scientific contributions—encompassing the research design, methodology, data analysis, and conclusions—were entirely conceptualized and executed by the human authors. We have carefully verified all AI-assisted language refinements and assume full responsibility for the manuscript's final content and integrity.

\subsection*{Reproducibility statement}
To ensure our findings are fully reproducible, all source code and trained model weights will be made publicly available upon the acceptance of this manuscript. Furthermore, we will provide comprehensive documentation and evaluation protocols to assist the research community in replicating our experiments.





\bibliography{iclr2027_conference}

@article{grpo,
  title={Deepseekmath: Pushing the limits of mathematical reasoning in open language models},
  author={Shao, Zhihong and Wang, Peiyi and Zhu, Qihao and Xu, Runxin and Song, Junxiao and Bi, Xiao and Zhang, Haowei and Zhang, Mingchuan and Li, YK and Wu, Yang and others},
  journal={arXiv preprint arXiv:2402.03300},
  year={2024}
}

@article{drgrpo,
  title={Understanding r1-zero-like training: A critical perspective},
  author={Liu, Zichen and Chen, Changyu and Li, Wenjun and Qi, Penghui and Pang, Tianyu and Du, Chao and Lee, Wee Sun and Lin, Min},
  journal={arXiv preprint arXiv:2503.20783},
  year={2025}
}

@article{dapo,
  title={DAPO: An Open-Source LLM Reinforcement Learning System at Scale}, 
      author={Qiying Yu and Zheng Zhang and Ruofei Zhu and Yufeng Yuan and Xiaochen Zuo and Yu Yue and Tiantian Fan and Gaohong Liu and Lingjun Liu and Xin Liu and Haibin Lin and Zhiqi Lin and Bole Ma and Guangming Sheng and Yuxuan Tong and Chi Zhang and Mofan Zhang and Wang Zhang and Hang Zhu and Jinhua Zhu and Jiaze Chen and Jiangjie Chen and Chengyi Wang and Hongli Yu and Weinan Dai and Yuxuan Song and Xiangpeng Wei and Hao Zhou and Jingjing Liu and Wei-Ying Ma and Ya-Qin Zhang and Lin Yan and Mu Qiao and Yonghui Wu and Mingxuan Wang},
      year={2025},
      eprint={2503.14476},
      archivePrefix={arXiv},
      primaryClass={cs.LG},
      url={https://arxiv.org/abs/2503.14476}, 
}

@article{opsd,
  title={Self-distilled reasoner: On-policy self-distillation for large language models},
  author={Zhao, Siyan and Xie, Zhihui and Liu, Mengchen and Huang, Jing and Pang, Guan and Chen, Feiyu and Grover, Aditya},
  journal={arXiv preprint arXiv:2601.18734},
  year={2026}
}

@inproceedings{gkd,
  title={On-policy distillation of language models: Learning from self-generated mistakes},
  author={Agarwal, Rishabh and Vieillard, Nino and Zhou, Yongchao and Stanczyk, Piotr and Ramos Garea, Sabela and Geist, Matthieu and Bachem, Olivier},
  booktitle={International Conference on Learning Representations},
  volume={2024},
  pages={21246--21263},
  year={2024}
}

@article{rlsd,
  title={Self-distilled rlvr},
  author={Yang, Chenxu and Qin, Chuanyu and Si, Qingyi and Chen, Minghui and Gu, Naibin and Yao, Dingyu and Lin, Zheng and Wang, Weiping and Wang, Jiaqi and Duan, Nan},
  journal={arXiv preprint arXiv:2604.03128},
  year={2026}
}

@article{rlrt,
  title={Rebellious student: Reversing teacher signals for reasoning exploration with self-distilled rlvr},
  author={Kim, Jeonghye and Jeon, Jiwon and Li, Dongsheng and Yang, Yuqing},
  journal={arXiv preprint arXiv:2605.10781},
  year={2026}
}

@article{sdpg,
  title={Self-distilled policy gradient},
  author={Liu, Yifeng and Zhang, Shiyuan and Zhang, Yifan and Gu, Quanquan},
  journal={arXiv preprint arXiv:2606.04036},
  year={2026}
}

@article{luffy,
  title={Learning to reason under off-policy guidance, 2025},
  author={Yan, Jianhao and Li, Yafu and Hu, Zican and Wang, Zhi and Cui, Ganqu and Qu, Xiaoye and Cheng, Yu and Zhang, Yue},
  journal={URL https://arxiv. org/abs/2504.14945},
  year={2025}
}

@article{eopd,
  title={Entropy-aware on-policy distillation of language models},
  author={Jin, Woogyeol and Min, Taywon and Yang, Yongjin and Wei, Dennis and Zhou, Yi and Kadhe, Swanand Ravindra and Baracaldo, Nathalie and Lee, Kimin},
  journal={arXiv preprint arXiv:2603.07079},
  year={2026}
}

@article{tip,
  title={Tip: Token importance in on-policy distillation},
  author={Xu, Yuanda and Sang, Hejian and Zhou, Zhengze and He, Ran and Wang, Zhipeng and Geramifard, Alborz},
  journal={arXiv preprint arXiv:2604.14084},
  year={2026}
}

@article{minillm,
  title={Minillm: On-policy distillation of large language models},
  author={Gu, Yuxian and Dong, Li and Wei, Furu and Huang, Minlie},
  journal={arXiv preprint arXiv:2306.08543},
  year={2023}
}

@article{opdvr,
  title={On-policy Distillation with Verifiable Reward},
  author={Lin, Wenze and Zhao, Jiale and Jiang, Xitai and Rao, Songde and Li, Yining and Wang, Shenzhi and He, Bingxiang and Huang, Gao},
  journal={arXiv preprint arXiv:2608.24696},
  year={2026}
}

@article{crpo,
  title={Contrastive Reinforced Policy Optimization via Privileged Self-Distillation}, 
      author={Xingjian Wu and Junlin Liu and Xingchen Liu and Xuhang Zhu and Jianing Wang and Linsen Guo and Xiaoyu Li and Xuezhi Cao and Xunliang Cai},
      year={2026},
      eprint={2607.28026},
      archivePrefix={arXiv},
      primaryClass={cs.LG},
      url={https://arxiv.org/abs/2607.28026}, 
}

@misc{mathai_amc23,
  author       = {{math-ai}},
  title        = {amc23: Dataset compiled from the 2023 American Mathematics Competitions},
  year         = {2023},
  publisher    = {Hugging Face},
  howpublished = {\url{https://huggingface.co/datasets/math-ai/amc23}}
}

@misc{aime24,
  title     = {American Invitational Mathematics Examination ({AIME}) 2024},
  author    = {Zhang, Yifan and {{Math-AI Team}}},
  year      = {2024},
  publisher = {Hugging Face},
  url       = {https://huggingface.co/datasets/math-ai/aime24}
}

@misc{aime25,
  title     = {American Invitational Mathematics Examination ({AIME}) 2025},
  author    = {Zhang, Yifan and {{Math-AI Team}}},
  year      = {2025},
  publisher = {Hugging Face},
  url       = {https://huggingface.co/datasets/math-ai/aime25}
}

@misc{xie2025logicrlunleashingllmreasoning,
      title={Logic-RL: Unleashing LLM Reasoning with Rule-Based Reinforcement Learning}, 
      author={Tian Xie and Zitian Gao and Qingnan Ren and Haoming Luo and Yuqian Hong and Bryan Dai and Joey Zhou and Kai Qiu and Zhirong Wu and Chong Luo},
      year={2025},
      eprint={2502.14768},
      archivePrefix={arXiv},
      primaryClass={cs.CL},
      url={https://arxiv.org/abs/2502.14768}, 
}

@misc{dstg,
      title={Distill Where You Fail: Recovering Learning Signals of Negative RL-Groups from Adaptive Teacher Guidance}, 
      author={Zhuowen Han and Jinwei Xiao and Zhengxi Lu and Renren Jin and Zhiyuan Yao and Yuxin Liu and Hongyan Hao and Yueqing Sun and Yu Yang and Qi GU and Xunliang Cai and Deyi Xiong},
      year={2026},
      eprint={2608.00782},
      archivePrefix={arXiv},
      primaryClass={cs.CL},
      url={https://arxiv.org/abs/2608.00782}, 
}

@misc{SKALD,
      title={Distill Skills into Weights, Not Prompts: Abstract Skills as Privileged Signals for On-Policy Self-Distillation}, 
      author={Yubo Jiang and Fengying Xie and Zhiguo Jiang and Haopeng Zhang},
      year={2026},
      eprint={2608.09826},
      archivePrefix={arXiv},
      primaryClass={cs.LG},
      url={https://arxiv.org/abs/2608.09826}, 
}

@article{UFT,
author       = {Liu, Mingyang and Farina, Gabriele and Ozdaglar, Asuman},
title        = {UFT: Unifying Supervised and Reinforcement Fine-Tuning},
journal      = {arXiv preprint arXiv:2505.16984},
year         = {2025}
}

@article{Schmitt2018KickstartingDR,
  title={Kickstarting Deep Reinforcement Learning},
  author={Simon Schmitt and Jonathan J. Hudson and Augustin Ž{\'i}dek and Simon Osindero and Carl Doersch and Wojciech M. Czarnecki and Joel Z. Leibo and Heinrich K{\"u}ttler and Andrew Zisserman and Karen Simonyan and S. M. Ali Eslami},
  journal={ArXiv},
  year={2018},
  volume={abs/1803.03835},
  url={https://api.semanticscholar.org/CorpusID:3883991}
}

@article{yu2020gradient,
  title={Gradient surgery for multi-task learning},
  author={Yu, Tianhe and Kumar, Saurabh and Gupta, Abhishek and Levine, Sergey and Hausman, Karol and Finn, Chelsea},
  journal={arXiv preprint arXiv:2001.06782},
  year={2020}
}

@inproceedings{
	dery2021auxiliary,
	title={{\{}AUXILIARY{\}} {\{}TASK{\}} {\{}UPDATE{\}} {\{}DECOMPOSITION{\}}: {\{}THE{\}} {\{}GOOD{\}}, {\{}THE{\}} {\{}BAD{\}} {\{}AND{\}} {\{}THE{\}} {\{}NEUTRAL{\}}},
	author={Lucio M. Dery and Yann Dauphin and David Grangier},
	booktitle={International Conference on Learning Representations},
	year={2021},
	url={https://openreview.net/forum?id=1GTma8HwlYp}
}

@article{pan2026rlcsd,
  title={RLCSD: Reinforcement Learning with Contrastive On-Policy Self-Distillation},
  author={Pan, Leyi and Tao, Shuchang and Zhai, Yunpeng and Zhang, Lingzhe and Liu, Zhaoyang and Ding, Bolin and Liu, Aiwei and Wen, Lijie},
  journal={arXiv preprint arXiv:2606.11709},
  year={2026}
}

@misc{wei2026amrsdasymmetricmetareflectiveselfdistillation,
      title={AMR-SD: Asymmetric Meta-Reflective Self-Distillation for Token-Level Credit Assignment}, 
      author={Zhenlin Wei and Pu Jian and Yingzhuo Deng and Xiaohan Wang and Jiajun Chai and Zhexin Hu and Wei Lin and Shanbin Zhang and Guojun Yin},
      year={2026},
      eprint={2605.18529},
      archivePrefix={arXiv},
      primaryClass={cs.AI},
      url={https://arxiv.org/abs/2605.18529}, 
}

@misc{li2026sequentialbeatsjointinterplay,
      title={Sequential Beats Joint: On the Interplay between On-Policy Distillation and RLVR}, 
      author={Boyan Li and Bingsen Chen and Chenghao Yang and Ping Nie and Chen Zhao and Xi Ye},
      year={2026},
      eprint={2609.04108},
      archivePrefix={arXiv},
      primaryClass={cs.CL},
      url={https://arxiv.org/abs/2609.04108}, 
}

@misc{yang2025qwen3technicalreport,
      title={Qwen3 Technical Report}, 
      author={An Yang and Anfeng Li and Baosong Yang and Beichen Zhang and Binyuan Hui and Bo Zheng and Bowen Yu and Chang Gao and Chengen Huang and Chenxu Lv and Chujie Zheng and Dayiheng Liu and Fan Zhou and Fei Huang and Feng Hu and Hao Ge and Haoran Wei and Huan Lin and Jialong Tang and Jian Yang and Jianhong Tu and Jianwei Zhang and Jianxin Yang and Jiaxi Yang and Jing Zhou and Jingren Zhou and Junyang Lin and Kai Dang and Keqin Bao and Kexin Yang and Le Yu and Lianghao Deng and Mei Li and Mingfeng Xue and Mingze Li and Pei Zhang and Peng Wang and Qin Zhu and Rui Men and Ruize Gao and Shixuan Liu and Shuang Luo and Tianhao Li and Tianyi Tang and Wenbiao Yin and Xingzhang Ren and Xinyu Wang and Xinyu Zhang and Xuancheng Ren and Yang Fan and Yang Su and Yichang Zhang and Yinger Zhang and Yu Wan and Yuqiong Liu and Zekun Wang and Zeyu Cui and Zhenru Zhang and Zhipeng Zhou and Zihan Qiu},
      year={2025},
      eprint={2505.09388},
      archivePrefix={arXiv},
      primaryClass={cs.CL},
      url={https://arxiv.org/abs/2505.09388}, 
}

@misc{olmo2026olmo3,
      title={Olmo 3}, 
      author={Team Olmo and : and Allyson Ettinger and Amanda Bertsch and Bailey Kuehl and David Graham and David Heineman and Dirk Groeneveld and Faeze Brahman and Finbarr Timbers and Hamish Ivison and Jacob Morrison and Jake Poznanski and Kyle Lo and Luca Soldaini and Matt Jordan and Mayee Chen and Michael Noukhovitch and Nathan Lambert and Pete Walsh and Pradeep Dasigi and Robert Berry and Saumya Malik and Saurabh Shah and Scott Geng and Shane Arora and Shashank Gupta and Taira Anderson and Teng Xiao and Tyler Murray and Tyler Romero and Victoria Graf and Akari Asai and Akshita Bhagia and Alexander Wettig and Alisa Liu and Aman Rangapur and Chloe Anastasiades and Costa Huang and Dustin Schwenk and Harsh Trivedi and Ian Magnusson and Jaron Lochner and Jiacheng Liu and Lester James V. Miranda and Maarten Sap and Malia Morgan and Michael Schmitz and Michal Guerquin and Michael Wilson and Regan Huff and Ronan Le Bras and Rui Xin and Rulin Shao and Sam Skjonsberg and Shannon Zejiang Shen and Shuyue Stella Li and Tucker Wilde and Valentina Pyatkin and Will Merrill and Yapei Chang and Yuling Gu and Zhiyuan Zeng and Ashish Sabharwal and Luke Zettlemoyer and Pang Wei Koh and Ali Farhadi and Noah A. Smith and Hannaneh Hajishirzi},
      year={2026},
      eprint={2512.13961},
      archivePrefix={arXiv},
      primaryClass={cs.CL},
      url={https://arxiv.org/abs/2512.13961}, 
}

@misc{he2025deepmath103klargescalechallengingdecontaminated,
      title={DeepMath-103K: A Large-Scale, Challenging, Decontaminated, and Verifiable Mathematical Dataset for Advancing Reasoning}, 
      author={Zhiwei He and Tian Liang and Jiahao Xu and Qiuzhi Liu and Xingyu Chen and Yue Wang and Linfeng Song and Dian Yu and Zhenwen Liang and Wenxuan Wang and Zhuosheng Zhang and Rui Wang and Zhaopeng Tu and Haitao Mi and Dong Yu},
      year={2025},
      eprint={2504.11456},
      archivePrefix={arXiv},
      primaryClass={cs.CL},
      url={https://arxiv.org/abs/2504.11456}, 
}

@misc{yan2025learningreasonoffpolicyguidance,
      title={Learning to Reason under Off-Policy Guidance}, 
      author={Jianhao Yan and Yafu Li and Zican Hu and Zhi Wang and Ganqu Cui and Xiaoye Qu and Yu Cheng and Yue Zhang},
      year={2025},
      eprint={2504.14945},
      archivePrefix={arXiv},
      primaryClass={cs.LG},
      url={https://arxiv.org/abs/2504.14945}, 
}

@misc{sdpo,
      title={Reinforcement Learning via Self-Distillation}, 
      author={Jonas Hübotter and Frederike Lübeck and Lejs Behric and Anton Baumann and Marco Bagatella and Daniel Marta and Ido Hakimi and Idan Shenfeld and Thomas Kleine Buening and Carlos Guestrin and Andreas Krause},
      year={2026},
      eprint={2601.20802},
      archivePrefix={arXiv},
      primaryClass={cs.LG},
      url={https://arxiv.org/abs/2601.20802}, 
}

@article{Guo_2025,
   title={DeepSeek-R1 incentivizes reasoning in LLMs through reinforcement learning},
   volume={645},
   ISSN={1476-4687},
   url={http://dx.doi.org/10.1038/s41586-025-09422-z},
   DOI={10.1038/s41586-025-09422-z},
   number={8081},
   journal={Nature},
   publisher={Springer Science and Business Media LLC},
   author={Guo, Daya and Yang, Dejian and Zhang, Haowei and Song, Junxiao and Wang, Peiyi and Zhu, Qihao and Xu, Runxin and Zhang, Ruoyu and Ma, Shirong and Bi, Xiao and Zhang, Xiaokang and Yu, Xingkai and Wu, Yu and Wu, Z. F. and Gou, Zhibin and Shao, Zhihong and Li, Zhuoshu and Gao, Ziyi and Liu, Aixin and Xue, Bing and Wang, Bingxuan and Wu, Bochao and Feng, Bei and Lu, Chengda and Zhao, Chenggang and Deng, Chengqi and Ruan, Chong and Dai, Damai and Chen, Deli and Ji, Dongjie and Li, Erhang and Lin, Fangyun and Dai, Fucong and Luo, Fuli and Hao, Guangbo and Chen, Guanting and Li, Guowei and Zhang, H. and Xu, Hanwei and Ding, Honghui and Gao, Huazuo and Qu, Hui and Li, Hui and Guo, Jianzhong and Li, Jiashi and Chen, Jingchang and Yuan, Jingyang and Tu, Jinhao and Qiu, Junjie and Li, Junlong and Cai, J. L. and Ni, Jiaqi and Liang, Jian and Chen, Jin and Dong, Kai and Hu, Kai and You, Kaichao and Gao, Kaige and Guan, Kang and Huang, Kexin and Yu, Kuai and Wang, Lean and Zhang, Lecong and Zhao, Liang and Wang, Litong and Zhang, Liyue and Xu, Lei and Xia, Leyi and Zhang, Mingchuan and Zhang, Minghua and Tang, Minghui and Zhou, Mingxu and Li, Meng and Wang, Miaojun and Li, Mingming and Tian, Ning and Huang, Panpan and Zhang, Peng and Wang, Qiancheng and Chen, Qinyu and Du, Qiushi and Ge, Ruiqi and Zhang, Ruisong and Pan, Ruizhe and Wang, Runji and Chen, R. J. and Jin, R. L. and Chen, Ruyi and Lu, Shanghao and Zhou, Shangyan and Chen, Shanhuang and Ye, Shengfeng and Wang, Shiyu and Yu, Shuiping and Zhou, Shunfeng and Pan, Shuting and Li, S. S. and Zhou, Shuang and Wu, Shaoqing and Yun, Tao and Pei, Tian and Sun, Tianyu and Wang, T. and Zeng, Wangding and Liu, Wen and Liang, Wenfeng and Gao, Wenjun and Yu, Wenqin and Zhang, Wentao and Xiao, W. L. and An, Wei and Liu, Xiaodong and Wang, Xiaohan and Chen, Xiaokang and Nie, Xiaotao and Cheng, Xin and Liu, Xin and Xie, Xin and Liu, Xingchao and Yang, Xinyu and Li, Xinyuan and Su, Xuecheng and Lin, Xuheng and Li, X. Q. and Jin, Xiangyue and Shen, Xiaojin and Chen, Xiaosha and Sun, Xiaowen and Wang, Xiaoxiang and Song, Xinnan and Zhou, Xinyi and Wang, Xianzu and Shan, Xinxia and Li, Y. K. and Wang, Y. Q. and Wei, Y. X. and Zhang, Yang and Xu, Yanhong and Li, Yao and Zhao, Yao and Sun, Yaofeng and Wang, Yaohui and Yu, Yi and Zhang, Yichao and Shi, Yifan and Xiong, Yiliang and He, Ying and Piao, Yishi and Wang, Yisong and Tan, Yixuan and Ma, Yiyang and Liu, Yiyuan and Guo, Yongqiang and Ou, Yuan and Wang, Yuduan and Gong, Yue and Zou, Yuheng and He, Yujia and Xiong, Yunfan and Luo, Yuxiang and You, Yuxiang and Liu, Yuxuan and Zhou, Yuyang and Zhu, Y. X. and Huang, Yanping and Li, Yaohui and Zheng, Yi and Zhu, Yuchen and Ma, Yunxian and Tang, Ying and Zha, Yukun and Yan, Yuting and Ren, Z. Z. and Ren, Zehui and Sha, Zhangli and Fu, Zhe and Xu, Zhean and Xie, Zhenda and Zhang, Zhengyan and Hao, Zhewen and Ma, Zhicheng and Yan, Zhigang and Wu, Zhiyu and Gu, Zihui and Zhu, Zijia and Liu, Zijun and Li, Zilin and Xie, Ziwei and Song, Ziyang and Pan, Zizheng and Huang, Zhen and Xu, Zhipeng and Zhang, Zhongyu and Zhang, Zhen},
   year={2025},
   pages={633–638} }

@misc{yin2018diversity,
      title={Gradient Diversity: a Key Ingredient for Scalable Distributed Learning}, 
      author={Dong Yin and Ashwin Pananjady and Max Lam and Dimitris Papailiopoulos and Kannan Ramchandran and Peter Bartlett},
      year={2018},
      eprint={1706.05699},
      archivePrefix={arXiv},
      primaryClass={cs.LG},
      url={https://arxiv.org/abs/1706.05699}, 
}
\bibliographystyle{iclr2027_conference}

\newpage

\appendix

\section{Main Notation}
\label{sec:main_notation}

For clarity, Table~\ref{tab:main_notation} summarizes the main notations
used in GIFT.

\begin{table*}[htbp]
\centering
\caption{Summary of the main notations used in GIFT.}
\label{tab:main_notation}
\renewcommand{\arraystretch}{1.12}
\setlength{\tabcolsep}{7pt}
\resizebox{0.99\textwidth}{!}{
\begin{tabular}{l l p{9.8cm}}
\toprule
\textbf{Notation} & \textbf{Type} & \textbf{Meaning} \\
\midrule

$i,\mathcal{I}$ 
& index / set 
& Token position $i$ and the set $\mathcal{I}$ of all valid token positions. \\

$z_i,\ p_i$ 
& vectors 
& Student logits and predictive distribution, where
$p_i=\operatorname{softmax}(z_i)$. \\

$\tilde q_i$ 
& distribution 
& Privileged teacher distribution at position $i$. \\

$u_i,\ e_{u_i}$ 
& token / vector 
& Sampled token and its corresponding one-hot vector. \\

$w_i,\ A_i,\ \rho_i,\ h_i$ 
& scalars 
& Token weight, advantage, importance ratio, and active-branch indicator in the clipped GRPO objective. \\

$\ell_i^{\mathrm{GRPO}}$ 
& loss 
& Token-level clipped GRPO loss providing outcome-grounded supervision. \\

$\ell_i^{\mathrm{sKL}}$ 
& loss 
& Token-level self-distillation loss aligning $p_i$ with $\tilde q_i$. \\

$r_i$ 
& vector 
& Outcome-driven logit gradient,
$r_i=\nabla_{z_i}\ell_i^{\mathrm{GRPO}}$. \\

$d_i$ 
& vector 
& Imitation-driven logit gradient,
$d_i=\nabla_{z_i}(w_i\ell_i^{\mathrm{sKL}})$. \\

$a_i$ 
& scalar 
& Signed parallel component,
$a_i=\langle d_i,\hat r_i\rangle$, measuring assistance or conflict with the RL gradient. \\

$b_i$ 
& scalar 
& Orthogonal magnitude,
$b_i=\lVert d_i-a_i\hat r_i\rVert_2$, measuring complementary teacher guidance. \\

$\mathcal{D}$ 
& set 
& Token positions with a usable non-zero RL reference gradient. \\

$\mathcal{E}_m$ 
& set 
& Distillation-eligible token positions in training sub-batch $m$. \\

$\mathcal{C}_m,\mathcal{U}_m$ 
& sets 
& Conflicting and non-conflicting eligible tokens, respectively. \\

$s_i$ 
& scalar 
& GIFT score: $a_i$ for conflicting tokens, $b_i$ for non-conflicting tokens, and zero when no usable RL reference exists. \\

$k_{t,m}$ 
& integer 
& Number of tokens retained in training sub-batch $m$ at step $t$. \\

$m_i(t)$ 
& indicator 
& Binary mask indicating whether token $i$ is selected at step $t$. \\

$\kappa_t$ 
& scalar 
& Retention ratio controlling the progressively shrinking token budget. \\

$H,\ T_{\mathrm{exit}}$ 
& integers 
& End of the full-coverage phase and step of complete teacher withdrawal. \\

$\lambda$ 
& scalar 
& Distillation coefficient, fixed at $\lambda=1$. \\

$N_{\mathrm{valid}}$ 
& integer 
& Total number of valid token positions used for loss normalization. \\

\bottomrule
\end{tabular}
}
\end{table*}

\section{Supporting Lemmas}
\label{app:lemmas}

Throughout we fix a token position and drop the index where unambiguous. Let
$p=\mathrm{softmax}(z)$, let $S$ be the teacher support with $|S|=M$, and set
\begin{equation}
  P_{\mathrm{tail}}=1-\sum_{v\in S}p(v)>0,\qquad
  Q_{\mathrm{tail}}=1-\sum_{v\in S}q(v)>0,\qquad
  \sigma=\frac{Q_{\mathrm{tail}}}{P_{\mathrm{tail}}},
\end{equation}
so the objective is finite and differentiable. We use
$\partial p(v)/\partial z(k)=p(v)(\delta_{vk}-p(k))$.

\begin{lemma}[Sparse forward-KL gradient]
\label{lem:skl}
The loss
$\ell^{\mathrm{sKL}}=\sum_{v\in S}q(v)\log\frac{q(v)}{p(v)}
+Q_{\mathrm{tail}}\log\frac{Q_{\mathrm{tail}}}{P_{\mathrm{tail}}}$
satisfies $\nabla_z\ell^{\mathrm{sKL}}=p-\tilde q$, where the tail-completed
teacher
\begin{equation}
  \tilde q(v)=q(v)\ \ (v\in S),\qquad
  \tilde q(v)=\sigma\,p(v)\ \ (v\notin S)
  \label{eq:qtilde}
\end{equation}
is a probability distribution.
\end{lemma}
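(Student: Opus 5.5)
The plan is a direct computation: differentiate the two parts of $\ell^{\mathrm{sKL}}$ separately with the softmax Jacobian $\partial p(v)/\partial z(k)=p(v)(\delta_{vk}-p(k))$, then verify that $\tilde q$ is a distribution. Only the terms that depend on $z$ matter. These are $-\sum_{v\in S}q(v)\log p(v)$ and $-Q_{\mathrm{tail}}\log P_{\mathrm{tail}}$; the entropy-like terms in $q$ are constant in $z$. Positivity of $P_{\mathrm{tail}}$ and $Q_{\mathrm{tail}}$, assumed in the setup, guarantees that all logarithms are finite and differentiable.

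\textbf{Support term.} Since $\partial \log p(v)/\partial z(k)=\delta_{vk}-p(k)$, we get
\[
\frac{\partial}{\partial z(k)}\Bigl(-\sum_{v\in S}q(v)\log p(v)\Bigr)=-q(k)\mathbf{1}[k\in S]+p(k)\,(1-Q_{\mathrm{tail}}),
\]
using $\sum_{v\in S}q(v)=1-Q_{\mathrm{tail}}$.

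\textbf{Tail term.} Differentiate $P_{\mathrm{tail}}=1-\sum_{v\in S}p(v)$:
\[
\frac{\partial P_{\mathrm{tail}}}{\partial z(k)}=-p(k)\mathbf{1}[k\in S]+p(k)(1-P_{\mathrm{tail}}),
\]
which equals $-p(k)P_{\mathrm{tail}}$ for $k\in S$ and $p(k)(1-P_{\mathrm{tail}})$ for $k\notin S$. Multiplying by $-Q_{\mathrm{tail}}/P_{\mathrm{tail}}=-\sigma$ gives a contribution of $Q_{\mathrm{tail}}\,p(k)$ on $S$, and $-\sigma p(k)+Q_{\mathrm{tail}}\,p(k)$ off $S$.

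\textbf{Summing.} For $k\in S$ the gradient is $p(k)(1-Q_{\mathrm{tail}})-q(k)+Q_{\mathrm{tail}}\,p(k)=p(k)-q(k)$. For $k\notin S$ it is $p(k)(1-Q_{\mathrm{tail}})-\sigma p(k)+Q_{\mathrm{tail}}\,p(k)=p(k)-\sigma p(k)$. Both cases match $p-\tilde q$ with $\tilde q$ as in Equation~\ref{eq:qtilde}. The one place needing care is the bookkeeping of the $\mathbf{1}[k\in S]$ indicators in the tail derivative, where sign slips are easy; I would double-check it by confirming that the gradient components sum to zero, as any softmax gradient must.

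\textbf{Distribution check.} Nonnegativity of $\tilde q$ is immediate since $q,p\ge0$ and $\sigma>0$. For normalization,
\[
\sum_{v\in S}q(v)+\sigma\sum_{v\notin S}p(v)=(1-Q_{\mathrm{tail}})+\frac{Q_{\mathrm{tail}}}{P_{\mathrm{tail}}}\,P_{\mathrm{tail}}=1.
\]
This is exactly the zero-sum property of $p-\tilde q$, so the consistency check above and the normalization are the same fact.
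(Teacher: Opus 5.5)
Your proposal is correct and follows the paper's proof essentially line for line: you use the same split into the support term $-\sum_{v\in S}q(v)\log p(v)$ and the tail term $-Q_{\mathrm{tail}}\log P_{\mathrm{tail}}$, the same softmax-Jacobian computation, and the same normalization check $\sum_{v\in S}q(v)+\sigma P_{\mathrm{tail}}=1$. The only cosmetic difference is that you differentiate $P_{\mathrm{tail}}=1-\sum_{v\in S}p(v)$ and split into the cases $k\in S$ and $k\notin S$, whereas the paper differentiates $P_{\mathrm{tail}}=\sum_{v\notin S}p(v)$ and carries the indicator $\mathbf{1}\{k\notin S\}$ throughout.
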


\begin{proof}
Only $-\sum_{v\in S}q(v)\log p(v)$ and $-Q_{\mathrm{tail}}\log P_{\mathrm{tail}}$
depend on $z$, as $q$ and $Q_{\mathrm{tail}}$ are detached. For the first,
\begin{equation}
  \frac{\partial}{\partial z(k)}\Bigl(-\sum_{v\in S}q(v)\log p(v)\Bigr)
  =-\sum_{v\in S}q(v)\bigl(\delta_{vk}-p(k)\bigr)
  =-q_S(k)+p(k)\sum_{v\in S}q(v),
\end{equation}
with $q_S(k)=q(k)\mathbf{1}\{k\in S\}$. For the second, since
$P_{\mathrm{tail}}=\sum_{v\notin S}p(v)$,
\begin{equation}
  \frac{\partial P_{\mathrm{tail}}}{\partial z(k)}
  =\mathbf{1}\{k\notin S\}p(k)-p(k)P_{\mathrm{tail}},
\end{equation}
so
$\frac{\partial}{\partial z(k)}(-Q_{\mathrm{tail}}\log P_{\mathrm{tail}})
=-\sigma\mathbf{1}\{k\notin S\}p(k)+p(k)Q_{\mathrm{tail}}$.
Adding the two and using $\sum_{v\in S}q(v)+Q_{\mathrm{tail}}=1$ gives
$p(k)-\tilde q(k)$. Finally $\tilde q\ge0$ and
$\sum_v\tilde q(v)=\sum_{v\in S}q(v)+\sigma P_{\mathrm{tail}}
=\sum_{v\in S}q(v)+Q_{\mathrm{tail}}=1$.
\end{proof}

\begin{proposition}[Closed-form scoring]
\label{prop:closedform}
With $\Pi=\sum_{v\in V}p(v)^2$ and $\pi_S=\sum_{v\in S}p(v)^2$,
\begin{equation}
  \|p-e_u\|_2^2=\Pi-2p(u)+1,\qquad
  \langle p,\tilde q\rangle=\!\!\sum_{v\in S}\!p(v)q(v)+\sigma(\Pi-\pi_S),\qquad
  \|\tilde q\|_2^2=\!\!\sum_{v\in S}\!q(v)^2+\sigma^2(\Pi-\pi_S),
\end{equation}
and $\langle p-\tilde q,\,p-e_u\rangle=\Pi-p(u)-\langle p,\tilde q\rangle+\tilde q(u)$.
Consequently $a$ and $b$ are computable from $\Pi$, $p(u)$, $\tilde q(u)$ and
$O(M)$ sums over $S$, with $\|d\|_2^2=w^2(\Pi-2\langle p,\tilde q\rangle
+\|\tilde q\|_2^2)$ and $b^2=\|d\|_2^2-a^2$. No $|V|$-dimensional gradient is
materialized and no auxiliary backward pass is needed.
\end{proposition}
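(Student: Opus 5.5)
The plan is to expand each quantity directly, using the explicit form of $\tilde q$ from Lemma~\ref{lem:skl}. The guiding idea is that $\tilde q$ equals $q$ on the teacher support $S$ and is proportional to $p$ off it. Hence any inner product involving $\tilde q$ splits into an $O(M)$ sum over $S$ plus a tail term. That tail term is a multiple of $\sum_{v\notin S}p(v)^2=\Pi-\pi_S$, and $\Pi$ is a single scalar reduction over $V$ that the softmax pass already supplies.

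\textbf{The three identities.} First, expanding $\|p-e_u\|_2^2=\|p\|^2-2\langle p,e_u\rangle+\|e_u\|^2$ gives $\Pi-2p(u)+1$ immediately. Second, I would split $\langle p,\tilde q\rangle$ over $S$ and $V\setminus S$. Substituting $\tilde q(v)=\sigma p(v)$ on the complement yields $\sum_{v\in S}p(v)q(v)+\sigma(\Pi-\pi_S)$. Third, the same split applied to $\|\tilde q\|_2^2$ gives $\sum_{v\in S}q(v)^2+\sigma^2(\Pi-\pi_S)$.

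\textbf{The cross term.} By bilinearity,
\[
\langle p-\tilde q,\,p-e_u\rangle=\Pi-p(u)-\langle p,\tilde q\rangle+\tilde q(u),
\]
where $\tilde q(u)$ is either $q(u)$ or $\sigma p(u)$, according to whether $u\in S$.

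\textbf{Assembling $a$ and $b$.} By equation~\ref{eq:grads}, we have $r=c\,(p-e_u)$ with $c=wA\rho h$ and $d=w(p-\tilde q)$. This gives
\[
a=\langle d,r\rangle/\|r\|_2=w\,\mathrm{sign}(c)\,\langle p-\tilde q,p-e_u\rangle/\|p-e_u\|_2 .
\]
The sign is needed because $\hat r$ flips with the sign of $A$. Also $\|p-e_u\|_2>0$ always holds, since $p$ is strictly positive and cannot be one-hot. Next, $\|d\|_2^2=w^2(\Pi-2\langle p,\tilde q\rangle+\|\tilde q\|_2^2)$ follows by expanding the square. Finally, $b^2=\|d\|^2-a^2$ is the Pythagorean relation stated after equation~\ref{eq:decomp}. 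Every ingredient is one of $\Pi$, $p(u)$, $\tilde q(u)$, $\sigma$, or an $O(M)$ sum over $S$, so no $|V|$-vector gradient is ever formed.

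The argument is routine algebra; its only delicate points are bookkeeping. The main one is handling the case $u\notin S$ in $\tilde q(u)$, and keeping $\mathrm{sign}(A_i)$ in $a$, so that the statement's formula for the inner product is correctly turned into the signed $a_i$. A secondary point is numerical: $b^2$ computed as a difference can come out slightly negative, so it should be clamped at zero.
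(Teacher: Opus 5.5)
Your proposal is correct and follows the paper's proof essentially step for step: you split each inner product over $S$ and its complement, substitute $\tilde q=\sigma p$ off the support so the tail becomes a multiple of $\Pi-\pi_S$, expand the cross term by bilinearity, and recover $b$ from $a^2+b^2=\|d\|_2^2$. The extra bookkeeping you add, namely the factor $\operatorname{sign}(c)=\operatorname{sign}(A)$ in $a$ and clamping $b^2$ at zero, matches what the paper records separately in Proposition~\ref{prop:invariance} and its implementation note.
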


\begin{proof}
Each identity splits the sum over $S$ and its complement and substitutes
Equation~\ref{eq:qtilde}: e.g.
$\langle p,\tilde q\rangle=\sum_{v\in S}p(v)q(v)+\sigma\sum_{v\notin S}p(v)^2$
and $\sum_{v\notin S}p(v)^2=\Pi-\pi_S$; identically for $\|\tilde q\|_2^2$ with
$\sigma^2$. For the last, expand
$\langle p-\tilde q,p-e_u\rangle=\|p\|_2^2-\langle p,e_u\rangle
-\langle\tilde q,p\rangle+\langle\tilde q,e_u\rangle$. The only vocabulary-wide
quantity is $\Pi$, a scalar reduction of probabilities already formed in the
forward pass. The expression for $b$ follows from $a^2+b^2=\|d\|_2^2$
(orthogonality).
\end{proof}

\emph{Implementation note.} $\tilde q(u)=q(u)$ if $u\in S$, else $\sigma p(u)$;
$b^2$ is clipped at zero before the square root to absorb cancellation when
$a^2\approx\|d\|_2^2$.

\begin{proposition}[Invariance of the score]
\label{prop:invariance}
For $i\in\mathcal{D}$,
\begin{equation}
  a_i=w_i\operatorname{sign}(A_i)\,
  \frac{\langle p_i-\tilde q_i,\;p_i-e_{u_i}\rangle}{\|p_i-e_{u_i}\|_2},
  \qquad b_i^2=\|d_i\|_2^2-a_i^2,
\end{equation}
so $a_i$ and $b_i$ are invariant to $|A_i|$ and to positive rescaling of
$\rho_i$, provided the active-branch and norm-threshold decisions are unchanged;
$b_i$ is in addition independent of $\operatorname{sign}(A_i)$.
\end{proposition}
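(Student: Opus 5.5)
The plan is to substitute the explicit gradients of Equation~\ref{eq:grads} into the definitions of Equation~\ref{eq:decomp} and simplify. For $i\in\mathcal{D}$ we have $r_i\neq0$, which forces $w_iA_i\rho_ih_i\neq0$. I write $c_i=w_iA_i\rho_ih_i$, so that $r_i=c_i(p_i-e_{u_i})$ with $p_i-e_{u_i}\neq0$. Taking norms gives $\|r_i\|_2=|c_i|\,\|p_i-e_{u_i}\|_2$, and therefore
\begin{equation*}
\hat r_i=\operatorname{sign}(c_i)\,\frac{p_i-e_{u_i}}{\|p_i-e_{u_i}\|_2}.
\end{equation*}
Taking the inner product with $d_i=w_i(p_i-\tilde q_i)$ then yields
\begin{equation*}
a_i=w_i\operatorname{sign}(c_i)\,\frac{\langle p_i-\tilde q_i,p_i-e_{u_i}\rangle}{\|p_i-e_{u_i}\|_2}.
\end{equation*}

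Next I determine the sign of $c_i$. The token weight $w_i$ is positive, since it is a normalization weight. The importance ratio $\rho_i$ is positive, being a ratio of probabilities. On the active clipped branch $h_i=1$, and this is the only case in $\mathcal D$. Hence $\operatorname{sign}(c_i)=\operatorname{sign}(A_i)$, which gives the stated formula for $a_i$. The identity $b_i^2=\|d_i\|_2^2-a_i^2$ is the Pythagorean relation already noted after Equation~\ref{eq:decomp}. It follows because $d_i^\perp$ is orthogonal to $\hat r_i$ and $\hat r_i$ has unit norm.

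For the invariance claims, I note that $d_i=w_i(p_i-\tilde q_i)$ does not involve $A_i$ or $\rho_i$. The formula for $a_i$ depends on $A_i$ only through its sign and does not depend on $\rho_i$ at all. Rescaling $|A_i|$, or rescaling $\rho_i$ by a positive factor, therefore leaves $a_i$ unchanged. This holds provided $i$ stays in $\mathcal D$ and keeps the same value of $h_i$, which is exactly the stated proviso about active-branch and threshold decisions. The quantity $b_i^2=\|d_i\|_2^2-a_i^2$ depends on $a_i$ only through $a_i^2$, so it is also independent of $\operatorname{sign}(A_i)$.

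The main point needing care is the sign bookkeeping in $c_i$. One must confirm that $w_i>0$ and $\rho_i>0$. One must also confirm that the clipping indicator is exactly $\{0,1\}$ and not a signed or clipped-constant factor. If $w_i$ could be zero, the token would have to be excluded from $\mathcal D$, since then $r_i=0$.
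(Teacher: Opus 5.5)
Your proposal is correct and follows essentially the same route as the paper's proof. Both write $r_i=c_i(p_i-e_{u_i})$, use $w_i,\rho_i>0$ (with $h_i=1$ on the active branch) to obtain $\hat r_i=\operatorname{sign}(A_i)(p_i-e_{u_i})/\|p_i-e_{u_i}\|_2$, read off $a_i$, and get the invariances from the facts that $\|d_i\|_2$ involves neither $A_i$ nor $\rho_i$ and that $b_i^2$ depends on $a_i$ only through $a_i^2$.
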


\begin{proof}
On the active branch $r_i=c\psi$ with $c=w_iA_i\rho_i\neq0$ and
$\psi=p_i-e_{u_i}\neq0$. Since $w_i,\rho_i>0$ we have $|c|=w_i|A_i|\rho_i$ and
\begin{equation}
  \hat r_i=\frac{c\psi}{|c|\|\psi\|_2}
  =\operatorname{sign}(A_i)\frac{\psi}{\|\psi\|_2},
\end{equation}
so the magnitude cancels. With $d_i=w_i(p_i-\tilde q_i)$ from
Lemma~\ref{lem:skl}, $a_i=\langle d_i,\hat r_i\rangle$ gives the display.
$\|d_i\|_2$ involves neither $A_i$ nor $\rho_i$, and $b_i^2=\|d_i\|_2^2-a_i^2$
depends on $a_i$ only through $a_i^2$, which is insensitive to
$\operatorname{sign}(A_i)$.
\end{proof}

\subsection{Gradient Diversity}
\label{app:diversity}

Theorem~\ref{thm:diversity}(iii) uses the following quantity, introduced by
\citet{yin2018diversity} to characterize how far mini-batch SGD can be scaled
before departing from the behaviour of its serial counterpart. We state it and
the two facts we rely on, so that the argument is self-contained.

\begin{definition}[Gradient diversity]
\label{def:diversity}
For gradients $g_1,\dots,g_n$ summed into a single update with
$\sum_jg_j\neq0$,
\begin{equation}
  \Delta\bigl(\{g_j\}\bigr)
  :=\frac{\sum_{j=1}^n\|g_j\|_2^{2}}{\bigl\|\sum_{j=1}^n g_j\bigr\|_2^{2}}
  =\frac{\sum_j\|g_j\|_2^2}
        {\sum_j\|g_j\|_2^2+\sum_{j\neq l}\langle g_j,g_l\rangle}.
  \label{eq:diversitydef}
\end{equation}
\end{definition}

\begin{lemma}[Endpoints of $\Delta$]
\label{lem:diversity}
If $g_1=\dots=g_n=g\neq0$ then $\Delta=1/n$. If the $g_j$ are nonzero and
mutually orthogonal then $\Delta=1$, whatever their norms. In general
$\Delta>1$ if and only if $\sum_{j\neq l}\langle g_j,g_l\rangle<0$.
\end{lemma}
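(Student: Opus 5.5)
The plan is to expand the denominator of Definition~\ref{def:diversity} and handle the three claims of Lemma~\ref{lem:diversity} one at a time; each is a one-line computation.

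\emph{Identical summands.} Take $g_1=\dots=g_n=g\neq0$. The numerator is $n\|g\|_2^2$. The sum is $ng$, so the denominator is $n^2\|g\|_2^2$. The ratio is $1/n$, and it is well defined because $ng\neq0$.

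\emph{Orthogonal summands.} Suppose the $g_j$ are nonzero and mutually orthogonal. Every cross term $\langle g_j,g_l\rangle$ with $j\neq l$ vanishes. By the second expression in Equation~\ref{eq:diversitydef}, the denominator then equals $\sum_j\|g_j\|_2^2$. This is strictly positive since each $g_j\neq0$, which also shows that $\sum_j g_j\neq0$ is automatic here. Hence $\Delta=1$, with no dependence on the individual norms.

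\emph{The general criterion.} Write $N=\sum_j\|g_j\|_2^2\ge0$ and $X=\sum_{j\neq l}\langle g_j,g_l\rangle$. The standing assumption $\sum_j g_j\neq0$ gives $D=N+X=\|\sum_j g_j\|_2^2>0$. So $\Delta>1$ is equivalent to $N>N+X$, that is, to $X<0$, and multiplying through by $D$ is legitimate because $D>0$.

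One edge case needs a remark. If $X<0$, then $N>0$, since otherwise every $g_j=0$ and $X=0$. So the claim does not degenerate.

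The only point needing care is this sign of the denominator. It is supplied by the hypothesis $\sum_j g_j\neq0$ in Definition~\ref{def:diversity}, so I expect no real obstacle.
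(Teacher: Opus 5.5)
Your proposal is correct and matches the paper's proof step for step. The steps are the same: the numerator/denominator computation for identical summands, the vanishing cross terms in the second form of Equation~\ref{eq:diversitydef} for orthogonal summands, and reading the $\Delta>1$ criterion off that same form. Your explicit use of $\|\sum_j g_j\|_2^2>0$ to justify clearing the denominator is slightly more precise than the paper's brief appeal to positivity of the numerator.
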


\begin{proof}
For identical gradients the numerator is $n\|g\|_2^2$ and the denominator
$\|ng\|_2^2=n^2\|g\|_2^2$, giving $1/n$. For mutually orthogonal gradients every
cross term in the second form of Equation~\ref{eq:diversitydef} vanishes, leaving
$\Delta=1$. The final claim is immediate from that same form, since the
numerator is positive.
\end{proof}

So $\Delta$ is a redundancy measure: it is smallest ($1/n$) when the summands
duplicate one another, equals exactly $1$ when they are mutually
non-interfering, and exceeds $1$ only when they partially cancel---the last case
being why an unconstrained preference for large $\Delta$ is unsafe. The regime
relevant here is $n=2$ with $g_1=r_i$ and $g_2=\lambda d_i$. We use only
Definition~\ref{def:diversity} and Lemma~\ref{lem:diversity}; the convergence
and stability bounds of \citet{yin2018diversity} are cited as context for why
low redundancy is desirable, but no statement in
Theorem~\ref{thm:diversity}(iii) depends on them.

\section{Proofs of the Main Results}
\label{app:proofs}

\subsection{Theorem~\ref{thm:noninterference} (non-interference)}

\begin{proof}
Let $\phi(\eta)=f_i(z_i-\eta(r_i+\lambda d_i))$. By the chain rule and
$\nabla f_i(z_i)=r_i$,
\begin{equation}
  \phi'(0)=\langle r_i,\,-(r_i+\lambda d_i)\rangle
  =-\|r_i\|_2^2-\lambda\langle r_i,d_i\rangle
  =-\|r_i\|_2^2-\lambda\|r_i\|_2a_i,
\end{equation}
using $r_i=\|r_i\|_2\hat r_i$ and $a_i=\langle d_i,\hat r_i\rangle$. 
This is $\frac{d}{d\eta}f_{i}(z_{i}(\eta))|_{\eta=0} = -||r_{i}||_{2}^{2} - \lambda||r_{i}||_{2}a_{i}$.
The GRPO-only step is $\lambda=0$, with value
$-\|r_i\|_2^2<0$; the teacher's contribution is the residual
$-\lambda\|r_i\|_2a_i$, whose sign is opposite to that of $a_i$ since
$\lambda\ge0,\|r_i\|_2>0$.

For (1): if $a_i>0$ then $\phi'(0)<-\|r_i\|_2^2$, a strictly larger decrease; if
$a_i=0$ then $\phi'(0)=-\|r_i\|_2^2$ exactly, independently of $\|d_i\|_2$, and
in particular the orthogonal component has no first-order effect at any
magnitude. For $a_i\ge0$ and any $\lambda\ge0$ we get $\phi'(0)\le-\|r_i\|_2^2<0$,
so the primary descent cannot be cancelled.

For (2): if $a_i<0$ then $\phi'(0)=-\|r_i\|_2^2+\lambda\|r_i\|_2|a_i|$, which
exceeds $-\|r_i\|_2^2$ by $\lambda\|r_i\|_2|a_i|>0$ for $\lambda>0$: the decrease
is necessarily smaller. It becomes nonnegative, i.e.\ the primary loss no longer
decreases to first order, exactly when
$\lambda\|r_i\|_2|a_i|\ge\|r_i\|_2^2$, i.e.\ $\lambda|a_i|\ge\|r_i\|_2$;
strict reversal ($\phi'(0)>0$) holds when $\lambda|a_i|>\|r_i\|_2$.
\end{proof}

\subsection{Theorem~\ref{thm:diversity} (complementarity)}

\begin{proof}[Part (i)]
Decompose $d_i=a_i\hat r_i+d_i^{\perp}$. Feasibility $r_i^\top v=0$ is equivalent
to $\hat r_i^\top v=0$, whence $-d_i^\top v=-(d_i^{\perp})^\top v$. By
Cauchy--Schwarz this is at most $\|d_i^{\perp}\|_2\|v\|_2\le b_i$. If $b_i>0$,
$v^\star=-d_i^{\perp}/b_i$ is feasible ($\|v^\star\|_2=1$,
$r_i^\top v^\star=0$) and attains $b_i$. If $b_i=0$ then $d_i^{\perp}=0$ and
every feasible $v$ gives $0=b_i$. (By Theorem~\ref{thm:noninterference} with the
perturbation $z_i+v$, the constraint $r_i^\top v=0$ is precisely first-order
neutrality of the primary loss.)
\end{proof}

\begin{proof}[Part (ii)]
Equip $\mathcal H=\bigoplus_{i\in\mathcal I}\mathbb R^{|\mathcal V|}$
with the Euclidean product norm
$\|(x_i)_i\|_{\mathcal H}^2=\sum_{i\in\mathcal I}\|x_i\|_2^2$.
Since $\mathcal I$ is finite,
$\mathcal M_0=\bigoplus_{i\in\mathcal I}\operatorname{span}(r_i)$
is a closed linear subspace of $\mathcal H$, where
$\operatorname{span}(0)=\{0\}$. Distinct token blocks are orthogonal,
so the orthogonal projection of $D_m=(m_i d_i)_i$ acts blockwise:
\begin{equation}
    (\Pi_{\mathcal M_0}D_m)_i=
    \begin{cases}
        m_i\dfrac{\langle d_i,r_i\rangle}{\|r_i\|_2^2}\,r_i,
            & r_i\ne0,\\[6pt]
        0,  & r_i=0.
    \end{cases}
\end{equation}
By the Pythagorean identity in each block and $m_i^2=m_i$,
\begin{equation}
\begin{aligned}
    \operatorname{dist}^2(D_m,\mathcal M_0)
    &=\|D_m-\Pi_{\mathcal M_0}D_m\|_{\mathcal H}^2\\
    &=\sum_{i:r_i\ne0}m_i
      \left(\|d_i\|_2^2-
      \frac{\langle d_i,r_i\rangle^2}{\|r_i\|_2^2}\right)
      +\sum_{i:r_i=0}m_i\|d_i\|_2^2.
\end{aligned}
\end{equation}
For $i\in\mathcal D_\epsilon=\{i:\|r_i\|_2>\epsilon_g\}$,
where $\epsilon_g\ge0$, the parenthesized term equals $b_i^2$
by the definition of the orthogonal residual. All remaining terms
are nonnegative. Consequently,
\begin{equation}
    \operatorname{dist}^2(D_m,\mathcal M_0)
    \ge\sum_{i\in\mathcal D_\epsilon}m_i b_i^2.
\end{equation}
If $m_i=1$ and $b_i>0$ for some $i\in\mathcal D_\epsilon$,
this lower bound is strictly positive, hence $D_m\notin\mathcal M_0$.
Thus the selected teacher field cannot be represented by token-local
scalar rescalings of the fixed GRPO reference field.
\end{proof}

\paragraph{Scope of Part (ii).}
The comparison space $\mathcal M_0$ permits arbitrary real-valued
rescalings of the fixed token-level GRPO gradients. It therefore
contains the fields produced by scalar-modulation rules whose
per-token gradients remain multiples of the same $r_i$, but need
not equal the reachable set of a particular algorithm with
nonnegative or bounded multipliers. No reference direction is
truncated: a small but nonzero $r_i$ still spans a line, whereas
$r_i=0$ contributes the zero subspace. The threshold
$\epsilon_g$ enters only through the lower bound involving
$\mathcal D_\epsilon$, not through the definition of $\mathcal M_0$.

The result concerns independent token blocks in logit space.
Non-membership in $\mathcal M_0$ does not imply non-membership in
its image after mapping through the shared model Jacobians;
distinct token-level fields can induce the same parameter gradient.
Thus, the theorem establishes local directional irreducibility,
not shared-parameter or function-space novelty, nor improved
verifier performance. Collinear teacher guidance can still affect
update magnitudes and optimization behavior.

Finally, the identity holds for any fixed binary mask and does not
assert that GIFT globally maximizes the distance to $\mathcal M_0$.
Its third lexicographic criterion maximizes only the selected
residual energy on the defined-reference, non-conflicting branch,
after the first two objective coordinates have been optimized.
This restricted selection property is distinct from maximizing
the total residual energy over all tokens.

\begin{proof}[Part (iii)]
Apply Definition~\ref{def:diversity} with $n=2$, $g_1=r_i$ and
$g_2=\lambda d_i$, so $\Delta_i=(\|r_i\|_2^2+\lambda^2\|d_i\|_2^2)/
\|r_i+\lambda d_i\|_2^2$. Write $\nu=\|r_i\|_2>0$ and $\mu=\lambda\|d_i\|_2$,
and expand the denominator using $\langle r_i,d_i\rangle=\nu a_i$:
\begin{equation}
  \|r_i+\lambda d_i\|_2^2=\nu^2+2\lambda\langle r_i,d_i\rangle+\lambda^2\|d_i\|_2^2
  =\nu^2+2\lambda\nu a_i+\mu^2 .
\end{equation}
Hence $\Delta_i=(\nu^2+\mu^2)/(\nu^2+\mu^2+2\lambda\nu a_i)$ has a numerator
independent of $a_i$ and a denominator strictly increasing in $a_i$ (slope
$2\lambda\nu>0$), so $\Delta_i$ is strictly decreasing in $a_i$ wherever the
denominator is positive. On the safe branch $a_i\in[0,\|d_i\|_2]$ we have
$a_i=\sqrt{\|d_i\|_2^2-b_i^2}$, strictly decreasing in $b_i\in[0,\|d_i\|_2]$;
composing the two monotonicities, $\Delta_i$ is strictly increasing in $b_i$
there. At $b_i=0$ (so $a_i=\|d_i\|_2$, the two gradients collinear) the value is
$(\nu^2+\mu^2)/(\nu+\mu)^2$, which lies in $[\tfrac12,1)$ for $\nu,\mu>0$: the
upper bound is strict by AM--GM, and the lower bound follows from
$(\nu+\mu)^2\le2(\nu^2+\mu^2)$ with equality iff $\nu=\mu$. At $a_i=0$
(orthogonal) the value is exactly $1$, consistent with Lemma~\ref{lem:diversity}.

For the unconstrained statement, $\Delta_i$ is strictly decreasing in $a_i$ over
the whole range $a_i\in[-\|d_i\|_2,\|d_i\|_2]$, hence maximized as
$a_i\to-\|d_i\|_2$; there $d_i=-\|d_i\|_2\hat r_i$ and the denominator equals
$(\nu-\mu)^2$, which vanishes when $\mu=\nu$, i.e.\ exactly when
$r_i+\lambda d_i=0$, at which point $\Delta_i$ diverges. By
Lemma~\ref{lem:diversity} this is the regime $\Delta_i>1$, in which the two
gradients partially cancel.
\end{proof}


\subsection{Theorem~\ref{thm:selection} (selection)}

\begin{proof}
Write $\mathcal C=\mathcal C_m$, $\mathcal U=\mathcal U_m$, and $\mathcal E=\mathcal E_m$, with the sets and objectives defined before Equation~\eqref{eq:lexprogram} 
Feasibility means $A\subseteq\mathcal E$ and $|A|=k$, for an integer $0\le k\le|\mathcal E|$.

\emph{Separation.} If $i\in\mathcal C$, then $s_i=a_i<0$. If $j\in\mathcal U\cap\mathcal D$, then $s_j=b_j\ge0$; if $j\in\mathcal U\setminus\mathcal D$, then $s_j=0$. Thus every candidate in $\mathcal U$ precedes every candidate in $\mathcal C$.

\emph{First coordinate.} Every size-$k$ subset satisfies
\[
 N^-(A)=k-|A\cap\mathcal U|\ge\max\{0,k-|\mathcal U|\}.
\]
A top-$k$ set attains this bound: it is contained in $\mathcal U$ when $k\le|\mathcal U|$, and contains all of $\mathcal U$ otherwise. This proves Equation~\eqref{eq:gift-selected-conflicts}.

\emph{Second coordinate.} Let $n=\max\{0,k-|\mathcal U|\}$. If $n=0$, then $C^-=0$ for every first-coordinate minimizer. If $n>0$, every such minimizer contains all of $\mathcal U$ and exactly $n$ elements of $\mathcal C$. Since $s_i=-|a_i|$ on $\mathcal C$, top-$k$ selection chooses the $n$ smallest $|a_i|$, minimizing $C^-$.

\emph{Third coordinate.} If $k>|\mathcal U|$, all first-coordinate minimizers contain $\mathcal U$, so $E^+$ is constant. If $k\le|\mathcal U|$, define $\beta_i=b_i$ on $\mathcal U\cap\mathcal D$ and $\beta_i=0$ on $\mathcal U\setminus\mathcal D$. Then $s_i=\beta_i\ge0$ on $\mathcal U$, and $E^+(A)=\sum_{i\in A}\beta_i^2$ for $A\subseteq\mathcal U$. Because squaring preserves order on $[0,\infty)$, the $k$ largest scores maximize $E^+$.

Equal scores give equal contributions to the relevant objective coordinate, including ties between $b_i=0$ and no-usable-reference positions. Hence arbitrary tie-breaking preserves all three objective values. The cases $k=0$ and $k=|\mathcal E|$ are included, as is the empty pool with $k=0$.
\end{proof}

\paragraph{Remark (Normalized opposition is not primary-loss damage).}
By Theorem~\ref{thm:noninterference}, the first-order change in the primary loss induced by an
admitted conflicting token $i$ is $-\lambda \|r_i\|_2\, a_i > 0$, which
scales with $\|r_i\|_2$ as well as $a_i$. The coordinate $O^{-}(A) =
\sum_{i \in A \cap C_m} |a_i|$ is therefore a \emph{normalized}
opposition---first-order damage per unit of RL-gradient norm (with
$\lambda = 1$)---and serves only as a ranking key. It coincides with
aggregate primary-loss damage only if all admitted conflicting tokens
share the same $\|r_i\|_2$, which is non-generic; we therefore avoid
calling it harm or severity.

\paragraph{Remark (Exact quota vs.\ hard constraint).}
Equation~\ref{eq:lexprogram} imposes $|A| = k$ rather than the
hard-constrained program $\max_m \sum_i m_i b_i^2$ subject to $a_i \ge 0$
for selected $i$ and $\sum_i m_i \le k$, which admits no conflicting token
and may select fewer than $k$ tokens. That hard-constrained rule is
implemented as the \emph{Conflict-Free} ablation (Appendix~\ref{app:gift-ablations},
Table~\ref{tab:selection_strategy}). The two coincide whenever the quota does not exceed the number
of non-conflicting eligible tokens ($k \le |U_m|$, the first case of
Theorem~\ref{thm:selection}); they differ only when the quota forces admission of
conflicting tokens, in which case GIFT admits those with the smallest
normalized opposition. The exact quota $|A_{t,m}| = \min\{K_{t,m},
|E_m|\}$ (Equation~\ref{eq:gift-app-budget}) moreover keeps realized sub-batch coverage a
deterministic function of the schedule $\kappa_t$ and the eligible set,
which the exposure accounting of Appendix~D relies on.

\section{Behaviour at GRPO Stationarity}
\label{app:stationary}

\begin{corollary}[Local behavior when the primary token gradient vanishes]
\label{cor:stationary}
Suppose $r_i=0$ and the token-level primary surrogate is differentiable. Then
every local logit perturbation is first-order neutral to that token-level
surrogate. Under the GIFT objective, the auxiliary logit-gradient block at token
$i$ is
\begin{equation}
    m_i(t)\frac{\lambda w_i}{N_{\mathrm{valid}}}d_i.
\end{equation}
Hence token $i$ contributes a nonzero distillation update only if
$m_i(t)=1$, $w_i\neq0$, $d_i\neq0$, and $t<T_{\mathrm{exit}}$. Eligibility or a
zero score alone does not guarantee selection.
\end{corollary}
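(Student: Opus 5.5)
The argument has three parts: first-order neutrality at $r_i=0$, an explicit differentiation of the GIFT objective in $z_i$, and a case analysis on the factors of the resulting block.

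\emph{Neutrality.} Differentiability of the token-level surrogate $f_i$ at $z_i$ gives, for any perturbation $v$,
\[
f_i(z_i+\eta v)=f_i(z_i)+\eta\langle r_i,v\rangle+o(\eta).
\]
With $r_i=0$ the linear term vanishes for every $v$, so every local perturbation is first-order neutral. This is the degenerate case of Theorem~\ref{thm:noninterference}, where $\phi'(0)=\langle r_i,-v\rangle$.

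\emph{The auxiliary block.} Take $\mathcal L^{\mathrm{GIFT}}_t$ from Equation~\ref{eq:objective}. In the sum over $\mathcal I$, the only summand depending on $z_i$ is the $i$-th one; the per-token losses are functions of their own logits, and we work at the level of independent logit blocks, as in Theorem~\ref{thm:diversity}(ii). The mask $m_i(t)$ is a selection computed from detached scores, so it is a constant with respect to $z_i$. The same holds for $\lambda$, $N_{\mathrm{valid}}$ and $w_i$. Lemma~\ref{lem:skl} gives
\[
\nabla_{z_i}\ell^{\mathrm{sKL}}_i=p_i-\tilde q_i .
\]
Hence the auxiliary block is
\[
\frac{\lambda}{N_{\mathrm{valid}}}\,m_i(t)\,w_i\,(p_i-\tilde q_i).
\]
Because $d_i=w_i(p_i-\tilde q_i)$ by Equation~\ref{eq:grads}, this block can be written in terms of $d_i$. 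The loss in Equation~\ref{eq:objective} carries one factor $w_i$ and $d_i$ carries another, so the block is $\frac{\lambda w_i}{N_{\mathrm{valid}}}m_i(t)d_i$ exactly as in the statement.

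\emph{Nonvanishing and the role of $T_{\mathrm{exit}}$.} The block is a product of scalars and a vector, with $\lambda=1$ and $N_{\mathrm{valid}}>0$. It is therefore nonzero only if $m_i(t)=1$, $w_i\neq0$ and $d_i\neq0$. For the time condition, the schedule~\eqref{eq:kappa} and the withdrawal rule force the budget $k_t=0$ for $t\ge T_{\mathrm{exit}}$. By Theorem~\ref{thm:selection} with $k=0$, the selected set is empty, so $m_i(t)=0$ for such $t$, and the condition $t<T_{\mathrm{exit}}$ is necessary.

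\emph{Eligibility or a zero score does not force selection.} A token with $r_i=0$ lies outside $\mathcal D$ and receives $s_i=0$. Theorem~\ref{thm:selection} admits it only when it falls within the top-$k$ scores, under arbitrary tie-breaking. A concrete witness is a sub-batch containing one token with $b_j>0$ and quota $k=1$: that token is chosen and the zero-score token is not.

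\emph{Main obstacle.} The delicate step is bookkeeping rather than analysis. One must reconcile the factor of $w_i$ appearing in both $d_i$ and the loss weight, which is a purely notational matter. One must also make explicit that $m_i(t)$ and the detached teacher quantities receive no gradient, which I take from the selection being computed on detached scores.
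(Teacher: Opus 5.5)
Your route is the same as the paper's: a directional-derivative argument for neutrality, differentiation of Equation~\ref{eq:objective} with the mask detached, and a reading-off of when the block can be nonzero. The neutrality, $T_{\mathrm{exit}}$, and non-guarantee parts are fine.

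\textbf{The gap.} The step that fails is the one you call purely notational. Your own computation gives the block $\frac{\lambda}{N_{\mathrm{valid}}}m_i(t)\,w_i\,(p_i-\tilde q_i)$, which contains exactly one factor $w_i$. Equation~\ref{eq:grads} defines $d_i=\nabla_{z_i}(w_i\ell^{\mathrm{sKL}}_i)=w_i(p_i-\tilde q_i)$, so that factor is the very one already inside $d_i$. Hence the block equals $m_i(t)\frac{\lambda}{N_{\mathrm{valid}}}d_i$. The displayed $m_i(t)\frac{\lambda w_i}{N_{\mathrm{valid}}}d_i$ equals $m_i(t)\frac{\lambda}{N_{\mathrm{valid}}}w_i^2(p_i-\tilde q_i)$, which matches what you computed only when $w_i\in\{0,1\}$. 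Your ``one factor in the loss and another in $d_i$'' counts the same $w_i$ twice, so the claim ``exactly as in the statement'' is a false identity under the paper's definitions. The paper's one-line proof just asserts the stated block and does not resolve this either.

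\textbf{How to fix it.} State which reading you adopt.
\begin{itemize}
\item If $d_i$ in the corollary denotes the unweighted gradient $\nabla_{z_i}\ell^{\mathrm{sKL}}_i=p_i-\tilde q_i$ of Lemma~\ref{lem:skl}, the display holds verbatim. This reading is consistent with the per-token coefficient $(\lambda/N_{\mathrm{valid}})w_i\nabla\ell^{\mathrm{sKL}}_i$ in Proposition~\ref{prop:norenorm}.
\item If $d_i$ is as in Equation~\ref{eq:grads}, the correct block is $m_i(t)\frac{\lambda}{N_{\mathrm{valid}}}d_i$, without the extra $w_i$.
\end{itemize}
The ``only if'' conclusion survives under both readings. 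A nonzero block needs $m_i(t)=1$, $w_i\neq0$ and $p_i\neq\tilde q_i$; with the weighted $d_i$, the condition $d_i\neq0$ already implies $w_i\neq0$.

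\textbf{A smaller point.} In your witness for the last claim, the competing token $j$ must lie in $\mathcal D$ with $a_j\ge0$ as well as $b_j>0$, so that $s_j=b_j>0$. A token with $b_j>0$ but $a_j<0$ scores $s_j=a_j<0$ and would lose the single slot to the zero-score token.
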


\begin{proof}
Because $r_i=\nabla_{z_i}f_i=0$, the directional derivative of $f_i$ along any
local logit perturbation $v$ is $r_i^\top v=0$. The auxiliary term in the GIFT
objective is multiplied by the detached mask $m_i(t)$ and normalized by
$N_{\mathrm{valid}}$, which gives the stated block. When no usable reference
direction exists, the implementation assigns token $i$ score zero; its update
therefore depends on whether it is selected under the exact quota.
\end{proof}

When the sampled GRPO gradient vanishes, GIFT assigns the token a zero score;
whether it receives distillation still depends on the exact-quota selection mask
(Corollary~1).



We record this as a structural observation rather than a claim about aggregate
behaviour. Whether a prompt whose group is uniformly correct or wrong
\emph{should} receive teacher supervision is an empirical question---the
supervision is first-order free with respect to GRPO, but freeness is not the
same as usefulness, and the group-level reward carries no information about the
teacher's reliability there. The corollary establishes only that GIFT's scoring
is well defined in this regime and that it assigns such tokens neutral rather
than negative priority.

\section{Termination and Bounded Exposure}
\label{app:exposure}

\begin{corollary}[Termination and nominal exposure]
\label{cor:exposure}
For $t\ge T_{\mathrm{exit}}$ all masks vanish and $L^{\mathrm{GIFT}}_t$ equals
the native GRPO objective identically, so teacher and scoring passes may be
skipped and the terminal phase is unprivileged. The nominal teacher exposure over
$T\ge T_{\mathrm{exit}}$ actor steps is
\begin{equation}
  \frac{1}{T}\sum_{t=1}^{T}\kappa_t
  =\frac{1}{T}\Bigl(H+\tfrac12\bigl(T_{\mathrm{exit}}-H-1\bigr)\Bigr),
  \label{eq:exposure}
\end{equation}
equal to $43.5/96\approx45.3\%$ for $(H,T_{\mathrm{exit}},T)=(10,78,96)$.
\end{corollary}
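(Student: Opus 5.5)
The plan is to treat the two claims separately: first the structural termination statement, then the exposure identity, which is a finite arithmetic sum once the schedule is pinned down. I will extend the schedule in Equation~\ref{eq:kappa} to all steps in the way the method section describes it: $\kappa_t=1$ during the full-coverage phase $1\le t\le H$, $\kappa_t=(T_{\mathrm{exit}}-t)/(T_{\mathrm{exit}}-H)$ for $H<t<T_{\mathrm{exit}}$, and $\kappa_t=0$ for $t\ge T_{\mathrm{exit}}$. I will also take the per-sub-batch quota to be the budget $K_{t,m}$ induced by $\kappa_t$ and $|E_m|$, so that $|A_{t,m}|=\min\{K_{t,m},|E_m|\}$ as in the Remark on the exact quota, with $K_{t,m}=0$ whenever $\kappa_t=0$.

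\textbf{Termination.} For $t\ge T_{\mathrm{exit}}$ the quota is $k=0$. By Theorem~\ref{thm:selection}, taken at $k=0$ (a case its proof explicitly covers), the selected set is empty, so $m_i(t)=0$ for every $i\in\mathcal I$. Substituting this into Equation~\ref{eq:objective} makes the auxiliary sum vanish term by term. This holds whatever the values of $w_i$, $\ell^{\mathrm{sKL}}_i$ and $N_{\mathrm{valid}}$, so $\mathcal L^{\mathrm{GIFT}}_t=\mathcal L^{\mathrm{GRPO}}_t$ identically as functions of the parameters. It follows that the gradients coincide as well. The teacher distribution $\tilde q_i$ and the scores $s_i$ enter the objective only through $\ell^{\mathrm{sKL}}_i$ and through the mask. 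The mask is already determined to be zero without consulting any score, since the quota is $0$. Hence the teacher forward pass and the scoring computation of Proposition~\ref{prop:closedform} can be skipped. Because the privileged context enters only through $\tilde q_i$, the terminal phase is unprivileged.

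\textbf{Exposure.} Write $L=T_{\mathrm{exit}}-H$ and split $\sum_{t=1}^{T}\kappa_t$ into the three phases. The first phase contributes $H$, and the terminal phase $t\ge T_{\mathrm{exit}}$ contributes $0$. For the annealing phase, I reindex by $j=T_{\mathrm{exit}}-t$, which runs over $1,\dots,L-1$. This gives
\[
\sum_{t=H+1}^{T_{\mathrm{exit}}-1}\frac{T_{\mathrm{exit}}-t}{L}=\frac{1}{L}\cdot\frac{(L-1)L}{2}=\frac{L-1}{2},
\]
which is exactly $\tfrac12(T_{\mathrm{exit}}-H-1)$. Dividing by $T$ yields Equation~\ref{eq:exposure}. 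Plugging in $(10,78,96)$ gives $L=68$ and $10+33.5=43.5$, hence $43.5/96\approx45.3\%$.

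The main obstacle is not the computation but the bookkeeping conventions. One is whether the step index starts at $1$. Another is whether the endpoints $t=H$ and $t=T_{\mathrm{exit}}$ belong to the constant or the linear pieces; the strict inequalities in Equation~\ref{eq:kappa} force the endpoint values $1$ and $0$. A third is that the realized quota, after any rounding of $\kappa_t|E_m|$, is exactly $0$ at $\kappa_t=0$. I will state these conventions explicitly. I will also note that the formula concerns the nominal ratio $\kappa_t$, not the realized rounded coverage, which is why the corollary says ``nominal''.
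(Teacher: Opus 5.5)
Your proposal is correct and follows the paper's proof essentially step for step. For $t\ge T_{\mathrm{exit}}$ the zero quota yields an empty top-$k$ set and vanishing masks, so the auxiliary sum drops out. The exposure sum is split into the three phases, and the reindexing $j=T_{\mathrm{exit}}-t$ gives $(T_{\mathrm{exit}}-H-1)/2$.

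One small cosmetic difference: the paper's quota is $K_{t,m}=\lfloor\kappa_t|I_m|\rfloor$ capped at $|\mathcal{E}_m|$, not a rounding of $\kappa_t|\mathcal{E}_m|$. This does not affect your argument, since both vanish at $\kappa_t=0$.
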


\begin{proof}
For $t\ge T_{\mathrm{exit}}$, $\kappa_t=0$ gives
$k_m(t)=\min\{0,|\mathcal{E}_m|\}=0$, so $\mathrm{TopK}$ returns $\emptyset$ and
$m_i(t)=0$ for all $i$; the auxiliary sum in Equation~\ref{eq:objective} vanishes
identically and $L^{\mathrm{GIFT}}_t=L^{\mathrm{GRPO}}_t$. Since the teacher and
scoring computations enter the loss only through that sum, both may be skipped
without changing the objective or its gradient. For the exposure, the first $H$
steps contribute $H$, and substituting $j=T_{\mathrm{exit}}-t$,
\begin{equation}
  \sum_{t=H+1}^{T_{\mathrm{exit}}-1}\frac{T_{\mathrm{exit}}-t}{T_{\mathrm{exit}}-H}
  =\frac{1}{T_{\mathrm{exit}}-H}\sum_{j=1}^{T_{\mathrm{exit}}-H-1}j
  =\frac{T_{\mathrm{exit}}-H-1}{2},
\end{equation}
while steps $t\ge T_{\mathrm{exit}}$ contribute zero. Dividing by $T$ gives
Equation~\ref{eq:exposure}; with $(H,T_{\mathrm{exit}},T)=(10,78,96)$ it is
$(10+33.5)/96=43.5/96\approx0.4531$.
\end{proof}

\emph{Remark.} This is \emph{nominal} exposure. Realized exposure
$\frac{1}{T}\sum_t\sum_im_i(t)/|I_m|$ can be strictly smaller because
$k_m(t)=\min\{\lfloor\kappa_t|I_m|\rfloor,|\mathcal{E}_m|\}$, so
Equation~\ref{eq:exposure} is an upper bound on the fraction of token-steps
receiving privileged supervision. 

\section{Coverage Annealing Does Not Renormalize Survivors}
\label{app:norenorm}

\begin{proposition}
\label{prop:norenorm}
Holding the rollout, model parameters, teacher targets, detached selection mask,
and token weights
Under Equation~\ref{eq:objective}, the auxiliary gradient contributed by a
retained token $i$ is $(\lambda/N_{\mathrm{valid}})w_i\nabla_\theta
\ell^{\mathrm{sKL}}_i$, independent of the budget and of which other tokens are
retained. Under coefficient annealing $\lambda\to\lambda_t$ at full coverage the
contribution is $(\lambda_t/N_{\mathrm{valid}})w_i\nabla_\theta
\ell^{\mathrm{sKL}}_i$, decaying at every position simultaneously. Under
selected-set normalization $1/N_{\mathrm{sel}}(t)$ with
$N_{\mathrm{sel}}(t)=\sum_mk_m(t)\ge1$, it is
$(\lambda/N_{\mathrm{sel}}(t))w_i\nabla_\theta\ell^{\mathrm{sKL}}_i$, which grows
as the budget shrinks.
\end{proposition}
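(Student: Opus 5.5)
The plan is to differentiate the auxiliary term of each objective with respect to $\theta$, treating every detached quantity as a constant. These are the mask $m_i(t)$, the weights $w_i$, the teacher targets $\tilde q_i$, and the counts $N_{\mathrm{valid}}$ and $N_{\mathrm{sel}}(t)$. The proof then consists of reading off the coefficient multiplying each per-token gradient. The GRPO term is common to all three schemes and contributes nothing token-specific to the auxiliary part, so it can be set aside.

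\textbf{Coverage annealing.} Start from Equation~\ref{eq:objective}. The auxiliary term is a finite sum $\frac{\lambda}{N_{\mathrm{valid}}}\sum_{j\in\mathcal I} m_j(t) w_j \ell^{\mathrm{sKL}}_j$. Here $m_j(t)\in\{0,1\}$ is detached, and $N_{\mathrm{valid}}=|\mathcal I|$ counts all valid positions regardless of selection. By linearity of $\nabla_\theta$, the contribution of a retained token $i$ (where $m_i(t)=1$) is exactly $(\lambda/N_{\mathrm{valid}})w_i\nabla_\theta\ell^{\mathrm{sKL}}_i$. Neither factor depends on $k_{t,m}$, on $\kappa_t$, or on the other entries $m_j(t)$. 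This gives the budget- and selection-independence. For consistency with the logit-level analysis, the chain rule through $z_i$ together with Lemma~\ref{lem:skl} expresses the same contribution as $(\lambda/N_{\mathrm{valid}})J_i^\top d_i$, where $J_i=\partial z_i/\partial\theta$. This step is optional.

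\textbf{The two comparison schemes.} For coefficient annealing, replace $\lambda$ by $\lambda_t$ and set $m_j\equiv1$. The identical computation gives $(\lambda_t/N_{\mathrm{valid}})w_i\nabla_\theta\ell^{\mathrm{sKL}}_i$ at every $i\in\mathcal I$. Since the scalar $\lambda_t$ is shared by all positions, every contribution decays simultaneously. For selected-set normalization, replace $N_{\mathrm{valid}}$ by $N_{\mathrm{sel}}(t)=\sum_m k_m(t)\ge1$, which is a detached integer. This gives the coefficient $\lambda/N_{\mathrm{sel}}(t)$. Because $k_m(t)=\min\{\lfloor\kappa_t|I_m|\rfloor,|\mathcal E_m|\}$ is nonincreasing as $\kappa_t$ decreases, $N_{\mathrm{sel}}(t)$ is nonincreasing and $1/N_{\mathrm{sel}}(t)$ is nondecreasing. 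The survivor's coefficient therefore grows, weakly, as the budget shrinks, and it grows strictly whenever some $k_m$ drops.

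\textbf{Main obstacle.} The only delicate point is justifying that the mask and the normalizers are constants under differentiation. The top-$k$ selection depends on $a_i$ and $b_i$, which are functions of $\theta$, so one must invoke the hypothesis that the selection mask is detached and held fixed. Under that hypothesis the argument is pure linearity. I would also clarify that ``grows'' is meant in the weak sense unless the counts strictly decrease.
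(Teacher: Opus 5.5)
Your proposal is correct and takes essentially the same route as the paper's proof. Both treat the mask, token weights, teacher targets and normalizer as constants and use linearity of $\nabla_\theta$ over the masked sum to read off the coefficient $\lambda/N_{\mathrm{valid}}$; they then substitute $\lambda_t$ or $N_{\mathrm{sel}}(t)$ and use that $k_m(t)$ is nondecreasing in $\kappa_t$, so the survivor weight grows weakly, and strictly exactly when $N_{\mathrm{sel}}$ drops. One small nit: the detached teacher quantities are $q_i$ and $Q_{\mathrm{tail}}$, not the tail-completed $\tilde q_i$, which depends on $p_i$ off the support $S$ (Lemma~\ref{lem:skl}); this does not affect your coefficient argument.
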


\begin{proof}
Differentiating Equation~\ref{eq:objective} with $m_i(t)$, $w_i$ and teacher
targets detached, the auxiliary gradient is
$(\lambda/N_{\mathrm{valid}})\sum_{i\in I}m_i(t)w_i\nabla_\theta
\ell^{\mathrm{sKL}}_i$. The coefficient of a retained term is
$\lambda/N_{\mathrm{valid}}$, and $N_{\mathrm{valid}}=|I|$ is fixed by the
rollout, independent of $\kappa_t$, $k_m(t)$ and the mask; $\lambda$ is constant.
Replacing $\lambda$ by $\lambda_t$ scales all terms by the common factor
$\lambda_t/\lambda$, so no position retains full strength once
$\lambda_t<\lambda$. Replacing $N_{\mathrm{valid}}$ by $N_{\mathrm{sel}}(t)$
gives per-token weight $\lambda/N_{\mathrm{sel}}(t)$; since
$k_m(t)\le\lfloor\kappa_t|I_m|\rfloor$ is nondecreasing in $\kappa_t$,
$N_{\mathrm{sel}}(t)$ is nondecreasing in the budget and the weight is
nonincreasing in it, strictly whenever $N_{\mathrm{sel}}$ strictly decreases.
\end{proof}

\emph{Remark.} This concerns per-token coefficients, not the aggregate auxiliary
gradient norm, which need not decrease monotonically under coverage annealing
because the retained set concentrates on the largest $b_i$.

\section{Baselines}
\label{app:baselines}

We compare GIFT with the initial model (\textsc{Base}) and eight training
baselines, grouped by supervision source and how that supervision enters
optimization.

\noindent\textbf{Initial model and outcome-based optimization.}
These baselines establish performance before additional training and under
verifier-based reinforcement learning without teacher feedback.
\begin{itemize}[leftmargin=*,itemsep=0pt,topsep=0pt]
    \item \textbf{\textsc{BASE}:} The initial model checkpoint without the additional task-specific training considered in our experiments.

    \item \textbf{GRPO~\citep{grpo}:} Performs pure reinforcement learning using verifier-derived group-relative advantages, without teacher-derived token-level supervision.
\end{itemize}

\noindent\textbf{Distributional self-distillation.}
These methods provide dense supervision by matching next-token distributions
from a self-teacher conditioned on additional reference information or
environment feedback.
\begin{itemize}[leftmargin=*,itemsep=0pt,topsep=0pt]
    \item \textbf{OPSD~\citep{opsd}:} Conditions a self-teacher on a correct reference answer or reasoning trace and matches its next-token distributions along student-generated trajectories. Its original formulation uses a forward-KL distillation objective without verifier-derived advantages.

    \item \textbf{SDPO~\citep{sdpo}:} Conditions a self-teacher on environment feedback or verified successful solutions and distills its next-token distributions along existing student trajectories.
\end{itemize}

\noindent\textbf{Verifier-guided token-level credit assignment.}
These methods combine verifiable outcomes with scalar teacher-derived signals
by modulating token-level advantages or gating token-level rewards.
\begin{itemize}[leftmargin=*,itemsep=0pt,topsep=0pt]
    \item \textbf{RLSD~\citep{rlsd}:} Uses sampled-token probability ratios between a privileged self-teacher and the student to construct positive, clipped weights for verifier-derived advantages. This sign-preserving modulation redistributes credit across tokens and is annealed during training.

    \item \textbf{RLCSD~\citep{pan2026rlcsd}:} Contrasts sampled-token likelihoods under correct- and incorrect-solution-conditioned teachers and applies bounded, threshold-gated modulation to verifier-derived advantages while preserving their signs.

    \item \textbf{OPDVR~\citep{opdvr}:} Gates sampled-token teacher--student log-ratio rewards according to trajectory correctness, retaining non-negative rewards for correct trajectories and non-positive rewards for incorrect trajectories.
\end{itemize}

\noindent\textbf{Demonstration-guided methods.}
These methods incorporate reference reasoning trajectories into training
samples or rollout contexts rather than relying only on teacher distributions
along unassisted student rollouts.
\begin{itemize}[leftmargin=*,itemsep=0pt,topsep=0pt]
    \item \textbf{LUFFY~\citep{yan2025learningreasonoffpolicyguidance}:} Combines off-policy teacher demonstrations with on-policy student rollouts and uses regularized importance sampling to learn from demonstrations while retaining student exploration.

    \item \textbf{UFT~\citep{UFT}:} Initializes rollouts with prefixes of reference solutions, applies supervised learning to the provided prefixes, and performs reinforcement learning on the generated continuations. The expected prefix length is progressively reduced until training becomes hint-free.
\end{itemize}

These comparisons separate the source of teacher guidance from how it enters
optimization. In contrast, GIFT uses gradient-informed token selection and
coverage annealing to control the contribution of an auxiliary distributional
loss while retaining the GRPO objective throughout training.

\section{Detailed Experimental Setup}
\label{app:gift-experimental-details}

This appendix describes the training data, evaluation benchmarks, GIFT
configuration, and evaluation protocol. The model-specific optimization
and systems settings reported below refer to the OLMo-3-7B-Think
configuration. They are listed separately from the settings shared by
the experimental design.

\subsection{Training Datasets}
\label{app:gift-training-data}

\paragraph{DeepMath-103K.}
DeepMath-103K~\citep{he2025deepmath103klargescalechallengingdecontaminated} contains 103,022 English mathematical-reasoning examples covering six major topics. Each problem is paired with a final answer, a GPT-4o-estimated difficulty score, a topic label, and three DeepSeek-R1 reasoning traces. The released dataset undergoes semantic decontamination, question standardization, and answer-consistency filtering, supporting both outcome verification and reference-conditioned distillation.

\paragraph{Knights and Knaves.}
We adopt the synthetic Knights and Knaves (K\&K) task from Logic-RL~\citep{xie2025logicrlunleashingllmreasoning}, where each character is either a truth-telling knight or a lying knave. Given their logically composed statements, the model must infer the unique role assignment, which can be verified using deterministic rules. We train on puzzles with 4--8 characters and evaluate role-count generalization on larger instances of the same task.

\subsection{Evaluation Benchmarks}
\label{app:gift-benchmarks}

\paragraph{AMC 2023.}
AMC evaluates mathematical problem solving across algebra, geometry, number
theory, counting, and probability~\citep{mathai_amc23}. Our evaluation set contains
83 problems, denoted as AMC23 in the result tables, and is scored by matching
the generated final answer.

\paragraph{AIME24 and AIME25.}
AIME24 and AIME25 each contain 30 problems from the corresponding year's
AIME I and AIME II examinations~\citep{aime24,aime25}. Each answer is an
integer from 0 to 999 and is evaluated using final-answer matching. Results
for the two years are reported separately.

\paragraph{In-distribution and out-of-distribution logic.}
For logical reasoning, we train on Knights-and-Knaves puzzles
(K\&K)~\citep{xie2025logicrlunleashingllmreasoning} with 4--8 roles.
We evaluate on 500 in-distribution problems and three out-of-distribution
splits with 9, 10, and 11 roles, each containing 100 problems, to assess
generalization beyond the training range.

\subsection{GIFT Configuration}
\label{app:gift-training-configuration}

\paragraph{Privileged self-teacher.}
The student generates responses without access to the reference solution.
A self-teacher then re-evaluates the same response prefixes with the reference
solution and answer included in its context. The teacher shares the current
actor parameters, with gradients stopped through the teacher branch, and is
used only during training.

\paragraph{Sparse forward-KL distillation.}
Let $p_i$ and $q_i$ be the student and teacher distributions at position $i$,
and let $S_i$ contain the teacher's top-100 tokens. We aggregate the remaining
vocabulary into a single tail and compute
\begin{equation}
\ell_i^{\mathrm{sKL}}
=
\sum_{v\in S_i}q_i(v)\log\frac{q_i(v)}{p_i(v)}
+
q_{i,\mathrm{tail}}
\log\frac{q_{i,\mathrm{tail}}}{p_{i,\mathrm{tail}}},
\label{eq:gift-app-sparse-kl}
\end{equation}
where $p_{i,\mathrm{tail}}=1-\sum_{v\in S_i}p_i(v)$ and likewise for
$q_{i,\mathrm{tail}}$. Teacher probabilities are treated as fixed targets.

\paragraph{Token ranking and allocation.}
For eligible response tokens, we compute the gradient geometry statistics $a_i$ and $b_i$ defined in the main text. 
The ranking score is $a_i$ when $a_i < 0$ and $b_i$ otherwise; tokens without a usable GRPO direction ($i \notin D$) are \emph{unclassified} and receive score zero. Thus, conflicting tokens remain eligible but are withdrawn first as coverage decreases: when the quota exceeds the number of non-conflicting and unclassified eligible tokens, the least
opposing conflicting tokens fill the remaining capacity, as guaranteed by Theorem~\ref{thm:selection}. The quota in Equation~25 is exact---$|A_{t,m}| = K_{t,m}$ whenever $K_{t,m} \le |E_m|$---so realized coverage tracks
$\kappa_t$ deterministically at the sub-batch level.
Selection is performed independently within each training sub-batch $m$:
\begin{equation}
K_{t,m}=\left\lfloor\kappa_t|I_m|\right\rfloor,
\qquad
|\mathcal A_{t,m}|=\min\{K_{t,m},|\mathcal E_m|\},
\label{eq:gift-app-budget}
\end{equation}
where $I_m$ and $\mathcal E_m$ denote its valid and eligible response tokens,
respectively. The highest-scoring eligible tokens are selected.

\paragraph{Coverage schedule and objective.}
Training spans 96 actor steps with
\begin{equation}
\kappa_t=
\begin{cases}
1, & 1\le t\le10,\\
\dfrac{78-t}{68}, & 11\le t\le77,\\
0, & 78\le t\le96.
\end{cases}
\label{eq:gift-app-coverage}
\end{equation}
We fix the distillation coefficient to $\lambda=1$. Selected tokens receive
the complete sparse forward-KL loss, normalized by the number of all valid
response tokens, while GRPO remains active throughout training. From step 78,
teacher evaluation is skipped and optimization reduces to GRPO alone.

\subsection{Ablation Coverage and Qualitative Diagnostics}
\label{app:gift-ablations}

We examine the token-selection criterion, allocation scope, and teacher-withdrawal strategy.
Let $r_i$ and $d_i$ denote the GRPO and OPSD loss gradients with respect to the student logits.
The signed parallel component $a_i$ and orthogonal residual norm $b_i$ follow the definitions in the method section.

\paragraph{Token-selection strategies.}
\begin{itemize}[leftmargin=*,itemsep=0pt,topsep=0pt]
    \item \textbf{All Tokens:}
    Adds the OPSD loss directly to the GRPO objective at all valid response positions, without token selection. This variant evaluates joint training with unfiltered distillation.

    \item \textbf{Random Tokens:}
    Selects token positions at random under the distillation budget, without using uncertainty or gradient geometry. It tests whether selective coverage alone is sufficient.

    \item \textbf{Entropy--KL Selection:}
Following the Q3 criterion in TIP~\citep{tip}, this variant
prioritizes positions where the student is confident but disagrees
with the teacher. Let
\begin{equation}
    h_i
    =
    -\frac{1}{\log |V|}
    \sum_{v\in V}p_i(v)\log p_i(v)
\end{equation}
denote the normalized student entropy. The selection score is
\begin{equation}
    s_i^{\mathrm{Q3}}
    =
    D_{\mathrm{KL}}\left(q_i\,\|\,p_i\right)
    \left(1-h_i\right).
\end{equation}
Tokens with larger scores have greater teacher--student divergence
and lower student uncertainty. They are selected under the same
coverage budget as GIFT.

    \item \textbf{Gradient-cosine Selection:}
    Uses the cosine similarity between the OPSD and GRPO logit gradients,
    $c_i=\langle d_i,r_i\rangle/(\|d_i\|_2\|r_i\|_2)$,
    at positions with usable gradients. This criterion measures directional agreement without separately accounting for the magnitude of the orthogonal residual.

    \item \textbf{Conflict-only:}
    Uses only the conflict information provided by the parallel component $a_i$ for token selection, omitting residual-based prioritization. It isolates the contribution of conflict assessment.

    \item \textbf{Residual-only:}
    Selects tokens solely according to the orthogonal residual norm $b_i$, without distinguishing conflicting from non-conflicting positions through the sign of $a_i$.

    \item \textbf{Teacher-norm:}
    Selects tokens according to $\|d_i\|_2$, the magnitude of the teacher-induced distillation gradient with respect to the student logits. It does not separate parallel and orthogonal contributions.

    \item \textbf{Conflict-Free:}
    Excludes every token with $a_i<0$ from distillation, regardless of the available budget. Unlike GIFT's conflict-sensitive ordering, this variant never admits conflicting tokens to fill unused capacity.
\end{itemize}

\paragraph{Allocation scopes.}
These variants change the set of tokens that compete for the distillation budget.
\begin{itemize}[leftmargin=*,itemsep=0pt,topsep=0pt]
    \item \textbf{Per-rollout:}
    Ranks and selects tokens independently within each generated response. Tokens from different responses do not compete for a shared quota.

    \item \textbf{Per-prompt:}
    Ranks tokens jointly across the group of responses generated for the same prompt. Tokens can compete across responses within the group, but not across different prompts.

    \item \textbf{Training sub-batch (GIFT):}
    Ranks tokens jointly within each training sub-batch, allowing competition across the responses and prompts it contains. Selection budgets remain independent across training sub-batches.
\end{itemize}

\paragraph{Teacher-withdrawal strategies.}
\begin{itemize}[leftmargin=*,itemsep=0pt,topsep=0pt]
    \item \textbf{Sequential OPSD$\rightarrow$GRPO:}
    Performs OPSD training first and then switches to GRPO, separating distillation and outcome-based optimization into successive stages.

    \item \textbf{Weight Annealing:}
    Gradually reduces the distillation-loss coefficient, weakening teacher supervision through its weight rather than progressively removing token positions.

    \item \textbf{Coverage Annealing (GIFT):}
    Progressively reduces the token-retention budget while keeping the distillation coefficient fixed at $\lambda=1$. Selected positions receive the full distillation loss, and training becomes pure GRPO when coverage reaches zero.
\end{itemize}

The preliminary schedule sweep evaluates linear decay with $(H,T_{\mathrm{exit}})$
equal to $(10,78)$, $(0,78)$, $(20,78)$, $(10,50)$, and $(10,90)$,
and cosine decay with $(10,78)$ and $(0,78)$.
Here, $H$ denotes the initial full-coverage period and $T_{\mathrm{exit}}$ the teacher exit step.
All-token and conflict-free variants also change coverage or eligibility; schedule comparisons do not imply matched cumulative teacher exposure or training cost.

\section{Additional Experiments}
\label{appendix: additional_experi}

We provide additional experiments on model-family generalization, teacher
withdrawal, and schedule sensitivity. These analyses examine the central
design of GIFT: dense teacher guidance is used early, progressively restricted
to higher-priority tokens, and eventually removed to recover pure
outcome-grounded optimization. We evaluate whether this strategy transfers
beyond Qwen3, outperforms alternative withdrawal mechanisms, and remains
effective under different annealing configurations.

\begin{figure}[t]
    \centering
    \includegraphics[width=0.99\linewidth]{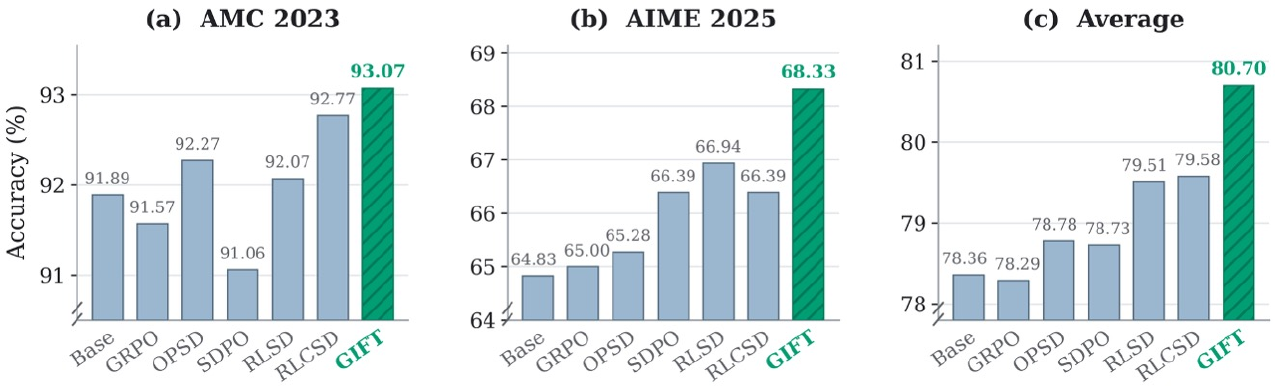}
    \vspace{-3pt}
    \caption{Results on OLMo3-7B, reported as mean@12.
Avg.\ is the equally weighted average over the two benchmarks.
Green numbers indicate absolute gains over Base.}
    \vspace{-8pt}
    \label{fig:gift-olmo}
\end{figure}

\paragraph{Evaluation beyond the Qwen3 family.}
On OLMo-3-7B-Think, GIFT achieves the highest mean@12 on both AMC23 and AIME25
(Figure~\ref{fig:gift-olmo}). Its two-benchmark average of 80.70\% exceeds
Base and the strongest baseline, RLCSD, by 2.34 and 1.12 percentage points,
respectively. The gain over Base is larger on AIME25 than on AMC23
(3.50 versus 1.18 points), consistent with the trend observed across the
Qwen3 models. These improvements are consistent with GIFT's central
motivation: rather than uniformly applying teacher supervision, GIFT
prioritizes non-conflicting updates that contain directional information
beyond the local RL gradient, while progressively reducing their coverage.
This allows dense guidance to complement the outcome-grounded objective
during early training without remaining the final optimization target.
The consistent gains on OLMo-3-7B-Think further suggest that this mechanism
is not specific to the Qwen3 model family.

\begin{table}[t]
\centering
\caption{Teacher-withdrawal schedules on Qwen3-8B (mean@12, \%).
Avg. equally weights the three benchmarks. Bold marks the best score in each column, including ties.}
\label{tab:gift-schedule}
\begin{tabular}{@{}lcccc@{}}
\toprule
Training schedule & AMC23 & AIME24 & AIME25 & Avg. \\
\midrule
OPSD $\rightarrow$ GRPO & 89.06 & 75.28 & 67.78 & 77.37 \\
Weight annealing & {89.66} & 72.22 & 66.39 & 76.09 \\
\midrule
\rowcolor{gray!12}
GIFT & {89.66} & {76.11} & {71.11} & {78.96} \\
\bottomrule
\end{tabular}%
\end{table}

\paragraph{Coverage annealing versus alternative withdrawal strategies.}
Table~\ref{tab:gift-schedule} compares GIFT with two alternative teacher-withdrawal
strategies. In sequential OPSD$\rightarrow$GRPO training, the model is first
optimized using only the OPSD objective and then switched to pure GRPO, so
dense teacher supervision and outcome-based optimization are applied in
separate stages. Weight annealing instead combines OPSD with GRPO throughout
the transition but applies distillation to all valid tokens and gradually
decays its global loss coefficient, without annealing token coverage. GIFT
outperforms these two alternatives by 1.59 and 2.87 percentage points on
average, respectively. Unlike sequential training, GIFT keeps the
outcome-grounded GRPO objective active while teacher guidance is available.
Unlike weight annealing, it progressively removes lower-priority positions
while preserving the full distillation strength at selected tokens. This
design retains concentrated teacher guidance where it is more compatible
with and complementary to the RL update, rather than uniformly weakening
every teacher contribution. Once coverage reaches zero, the auxiliary loss
vanishes and optimization continues with pure GRPO. The results support joint
optimization with selective coverage annealing over either stage-wise
training or uniform coefficient decay.

\begin{table}[t]
\centering
\caption{Preliminary sensitivity to the full-coverage period $H$,
teacher exit step $T_{\mathrm{exit}}$, and decay shape on Qwen3-8B (mean@12, \%).
Avg. equally weights the three benchmarks. Bold marks the best score in each evaluation column.}
\label{tab:gift-sensitivity}
\begin{tabular}{@{}lcccccc@{}}
\toprule
\multirow{2}{*}{Schedule}
& \multicolumn{2}{c}{Hyperparameters}
& \multicolumn{4}{c}{Evaluation} \\
\cmidrule(lr){2-3}\cmidrule(lr){4-7}
& $H$ & $T_{\mathrm{exit}}$ & AMC23 & AIME24 & AIME25 & Avg. \\
\midrule
\rowcolor{gray!12}
Linear (GIFT) & 10 & 78 & 89.66 & 76.11 & \textbf{71.11} & \textbf{78.96} \\
Linear & 0  & 78 & 89.36 & 75.56 & 69.17 & 78.03 \\
Linear & 20 & 78 & 89.56 & 75.28 & 66.94 & 77.26 \\
Linear & 10 & 50 & 90.16 & \textbf{76.39} & 65.83 & 77.46 \\
Linear & 10 & 90 & 89.86 & 75.00 & 68.06 & 77.64 \\
\midrule
Cosine & 10 & 78 & \textbf{90.46} & 76.11 & 67.22 & 77.93 \\
Cosine & 0  & 78 & 89.96 & 73.89 & 67.22 & 77.02 \\
\bottomrule
\end{tabular}%
\end{table}

\paragraph{Sensitivity to warm-up and exit steps.}
Table~\ref{tab:gift-sensitivity} presents a preliminary sweep of the full-coverage
warm-up $H$ and teacher exit step $T_{\mathrm{exit}}$.
With $T_{\mathrm{exit}}=78$, removing warm-up or extending it to 20 steps reduces
the average from 78.96\% to 78.03\% or 77.26\%, respectively.
With $H=10$, moving teacher exit to step 50 or 90 gives 77.46\% or 77.64\%.
The default configuration therefore performs best among the tested linear schedules.
These results favor an initial full-coverage period followed by gradual withdrawal,
while showing that either shortening or extending teacher guidance can reduce performance.

Together, these experiments support GIFT's combination of conflict-aware token selection,
shared training-sub-batch allocation, and progressive teacher withdrawal.
The evidence concerns reasoning performance and observed selection behavior,
rather than direct measurements of parameter-space gradient compatibility or optimization plasticity.

\section{Qualitative Analysis}
\label{appendix:qualitative}

We provide three qualitative analyses to examine both the reasoning outcomes
and token-selection behavior of GIFT. First, we compare representative
solutions from GIFT, GRPO, and OPSD to illustrate how combining dense guidance
with outcome-grounded optimization affects reasoning accuracy. Second, we
visualize the token positions retained at different training steps, showing
how supervision becomes progressively sparse within a response. Third, we
analyze retention in the space of the signed parallel component $a_i$ and
orthogonal residual norm $b_i$, connecting the observed selection pattern to
GIFT's piecewise ranking rule. Together, these visualizations illustrate how
GIFT transitions from broad teacher supervision to concentrated guidance on
higher-priority positions as the coverage budget contracts.

\begin{figure}[t]
    \centering
    \includegraphics[width=0.98\linewidth]{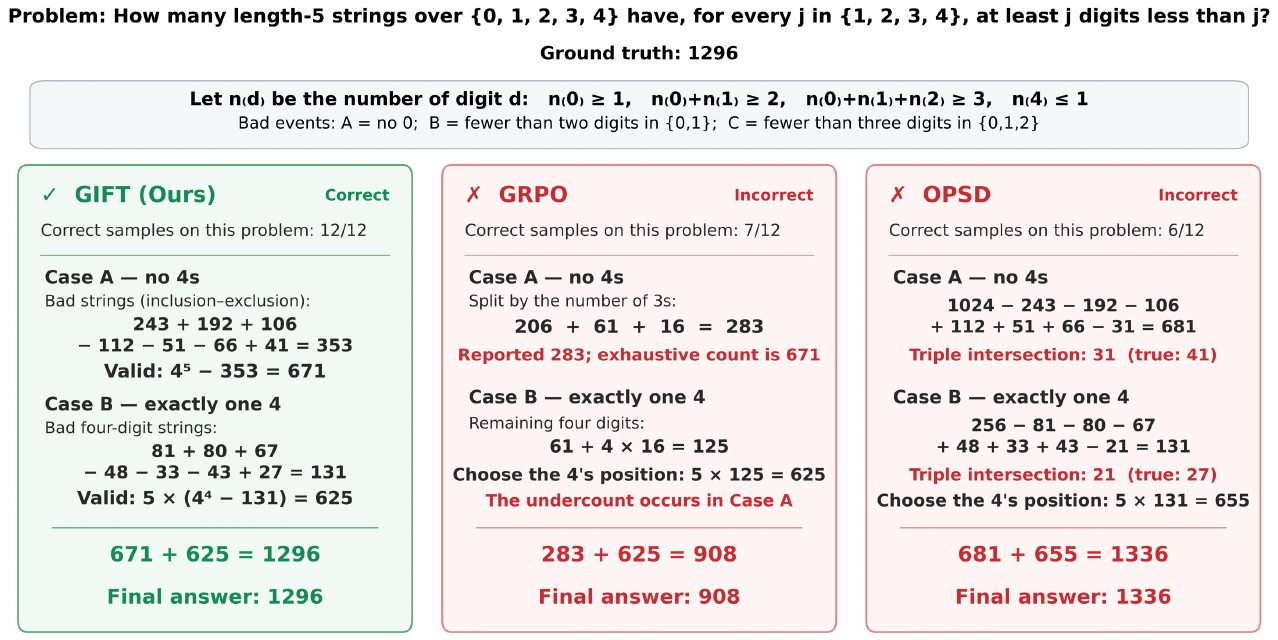}
    \vspace{-3pt}
\caption{
Example solutions to a combinatorial counting problem.
The displayed GIFT solution is correct, while the GRPO and OPSD
solutions contain counting errors.
Correct-sample counts summarize 12 generations per method.
}
    \vspace{-8pt}
\label{fig:gift-counting-case}
\end{figure}

\paragraph{Qualitative analysis of reasoning accuracy.}
Figure~\ref{fig:gift-counting-case} compares solutions to a combinatorial
counting problem. All three displayed solutions separate strings
according to whether they contain a digit 4, but differ in how
accurately they execute the subsequent counting.
GRPO undercounts the no-4 case as 283 instead of 671, while OPSD
miscalculates intersection terms in inclusion--exclusion and obtains
an incorrect total of 1336. GIFT correctly handles both cases,
yielding $671+625=1296$.
Across 12 sampled responses to this problem, GIFT produces 12 correct
answers, compared with 7 for GRPO and 6 for OPSD.
The contrast connects to our introductory motivation: a plausible
reasoning structure does not ensure correct intermediate decisions,
and neither outcome-based training nor self-distillation alone
eliminates such errors in these examples.
GIFT's more consistent answers provide a qualitative illustration
of the goal of combining dense guidance with outcome-grounded
optimization, complementing the quantitative ablations of its
selection and withdrawal mechanisms.

\begin{figure}[t]
    \centering
    \includegraphics[width=0.98\linewidth]{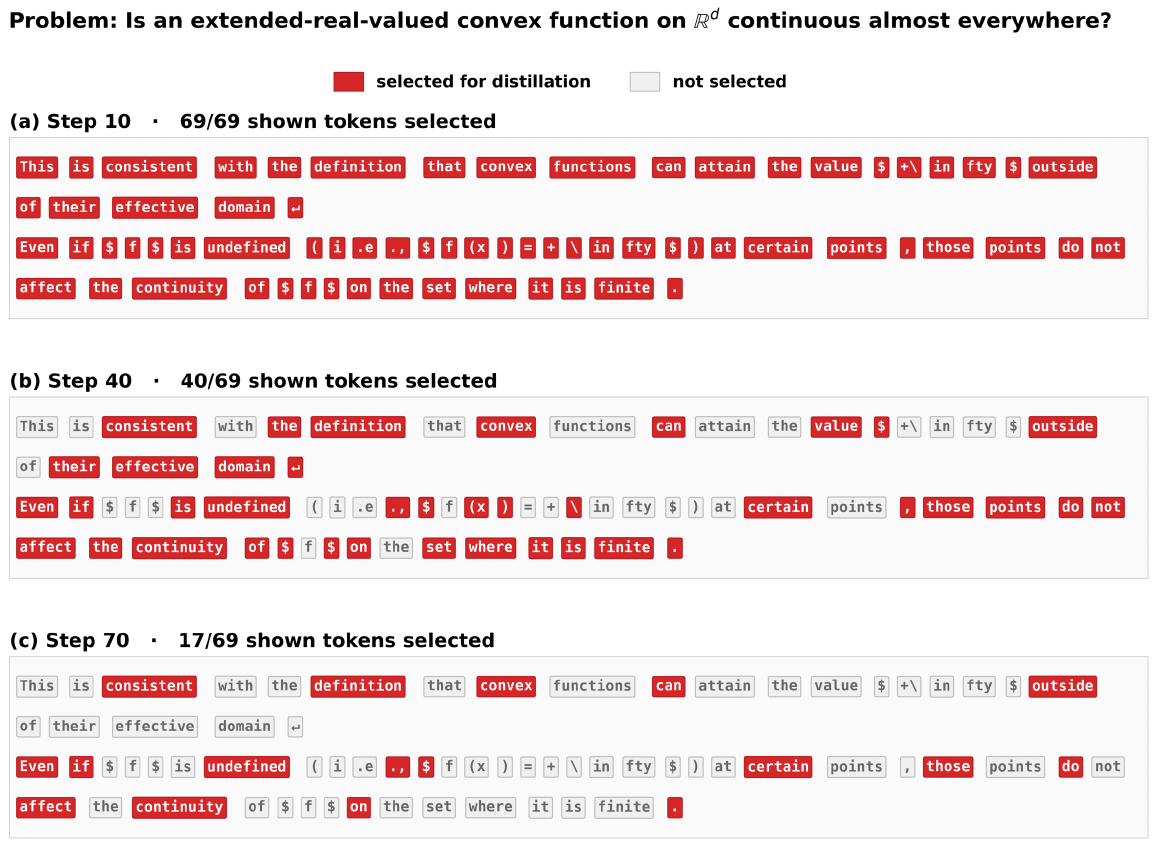}
    \vspace{-3pt}
    \caption{
    Token-level distillation selection in a reasoning excerpt.
    Red marks selected tokens and gray marks unselected tokens.
    The number of selected positions decreases from 69 to 40 to 17
    across the displayed steps; counts refer only to the shown excerpt.
    }
    \vspace{-8pt}
    \label{fig:gift-token-selection-case}
\end{figure}

\paragraph{Qualitative analysis of token selection.}
Figure~\ref{fig:gift-token-selection-case} illustrates how distillation
coverage changes within a reasoning excerpt about convex functions.
All 69 displayed tokens are selected at step 10, compared with 40
at step 40 and 17 at step 70.
The remaining positions include words such as ``convex,'' ``outside,''
and ``continuity,'' while many surrounding words and mathematical
symbols no longer receive distillation.
Supervision therefore becomes sparse within sentences rather than
being removed at the sentence level.
This example illustrates the allocation problem raised in the
introduction: reducing teacher influence involves deciding where
guidance remains, not merely weakening every teacher contribution.
GIFT makes this decision through gradient-based ranking within each
training sub-batch. Selected positions retain the full distillation
loss, while all valid tokens remain included in the GRPO objective.
The visualization shows where supervision is applied, rather than
establishing the semantic importance or correctness of individual tokens.

\begin{figure}[t]
\centering
\includegraphics[width=0.99\linewidth]{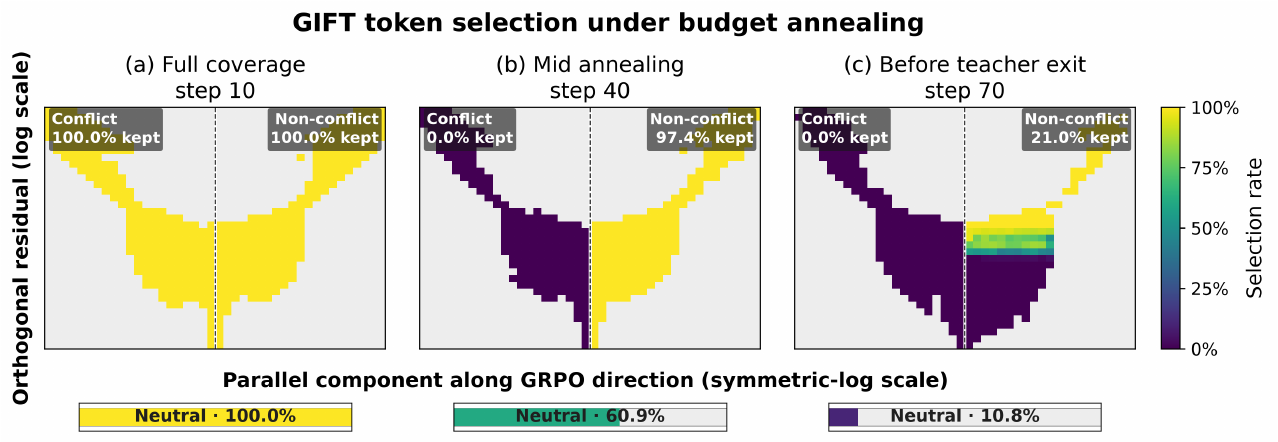}
\caption{Token retention under coverage annealing.
The horizontal and vertical axes show the signed parallel component $a_i$
and orthogonal residual norm $b_i$ of the distillation gradient relative to the GRPO direction.
Colors indicate the fraction of eligible tokens retained within each bin.
Annotations summarize retention for conflicting ($a_i<0$) and non-conflicting ($a_i\geq0$)
tokens with usable GRPO reference directions.
Bottom bars separately report retention for neutral tokens without such a direction.}
\label{fig:gift-annealing}
\end{figure}

\paragraph{Token selection under budget annealing.}
Figure~\ref{fig:gift-annealing} shows how GIFT concentrates distillation
as the token budget contracts. At step 10, all eligible tokens are
retained, including conflicting and neutral positions.
At step 40, conflicting-token retention is zero in the displayed
statistics, while 97.4\% of non-conflicting tokens and 60.9\% of
neutral tokens remain selected. By step 70, these rates decrease to
21.0\% and 10.8\%, respectively, with non-conflicting selection
concentrated at larger orthogonal residuals.
This pattern follows the piecewise ranking: conflicting tokens receive
negative scores, non-conflicting tokens are prioritized by residual
magnitude, and neutral tokens remain eligible with zero scores.
Competition within each training sub-batch therefore produces a
selective withdrawal of supervision rather than uniform attenuation.
Retained tokens receive the full, unprojected distillation loss,
while all valid response tokens remain included in the GRPO objective.

\end{document}